\newcommand{\pcecoe}[2]{\mathsf{#1}^{#2}}
\newcommand{\mean}[1]{\mathbb{E}\left[#1\right]}
\newcommand{\var}[1]{\mathbb{V}\left[#1\right]}
\newcommand{\mat}[1]{\ensuremath{\begin{bmatrix} #1 \end{bmatrix}}}
\renewcommand{\rho}{\varrho}
\renewcommand{\exp}[1]{\mathrm{exp}\left(#1\right)}
\renewcommand{\ln}[1]{\mathrm{ln}\left(#1\right)}
\renewcommand{\det}[1]{\mathrm{det}\left(#1\right)}
\newcommand{\norm}[1]{\left\lVert #1 \right\rVert}
\newcommand{\abs}[1]{\left\lvert #1 \right\rvert}
\newcommand{\inprod}[1]{\left\langle #1 \right\rangle}
\newtheorem{theorem}{Theorem}
\newtheorem{lemma}{Lemma}
\newtheorem{definition}{Definition}
\newtheorem{assumption}{Assumption}
\newtheorem{remark}{Remark}
\newacronym{bo}{BO}{Bayesian Optimization}
\newacronym{gp}{GP}{Gaussian Process}
\newacronym{pce}{PCE}{Polynomial Chaos Expansion}
\newacronym{rkhs}{RKHS}{Reproducing Kernel Hilbert Space}
\newacronym{iid}{i.i.d.}{independent and identically distributed}
\newglossaryentry{kermean}{type=symbols,
	sort={mu},
	dimension={\ensuremath{ n_y }},
	name={\ensuremath{\mu}},
	description={Mean predictor from kernel/GP regression}
}
\newglossaryentry{kerstd}{type=symbols,
	sort={sigmaGP},
	dimension={\ensuremath{ 1 }},
	name={\ensuremath{\sigma_{\mathrm{GP}}}},
	description={GP posterior standard deviation}
}
\newglossaryentry{wiestd}{type=symbols,
	sort={sigmaWK},
	dimension={\ensuremath{ 1 }},
	name={\ensuremath{\sigma_{\mathrm{WK}}}},
	description={Wiener kernel standard deviation}
}
\newglossaryentry{gram}{type=symbols,
	sort={K},
	dimension={\ensuremath{ D \times D }},
	name={\ensuremath{\mathbf{K}}},
	description={Gram matrix}
}
\newglossaryentry{gramvec}{type=symbols,
	sort={k},
	dimension={\ensuremath{ D }},
	name={\ensuremath{\mathbf{k}}},
	description={Data-dependent feature vector based on kernel}
}
\newglossaryentry{bound}{type=symbols,
	sort={bound},
	dimension={\ensuremath{ 1 }},
	name={\ensuremath{\eta}},
	description={Bounding function for probabilistic uniform error bound}
}
\newglossaryentry{xsafe}{type=symbols,
	sort={xsafe},
	dimension={\ensuremath{ \mathcal{X}_{\mathrm{safe}} }},
	name={\ensuremath{x_{\mathrm{safe}}}},
	description={Guaranteed safe action}
}
\let\NAT@parse\undefined
\title{\LARGE \bf
Towards safe Bayesian optimization with Wiener kernel regression
}
\author{Oleksii Molodchyk$^\star$, Johannes Teutsch$^\star$, and Timm Faulwasser$^\diamond$
\thanks{$^\star$: Equally contributing first authors. $^\diamond$: Corresponding author.
OM and TF acknowledge funding in the course of TRR 391 \textit{Spatio-temporal Statistics for the Transition of Energy and Transport} (520388526) by the Deutsche Forschungsgemeinschaft (DFG,
German Research Foundation).
}
\thanks{OM and TF are with the Institute of Control Systems,
        Hamburg University of Technology, 21073 Hamburg, Germany
        {\tt\small oleksii.molodchyk@tuhh.de} and {\tt\small timm.faulwasser@ieee.org}.}
\thanks{JT is with the Chair of Automatic Control Engineering,
        Technical University of Munich, 80333 Munich, Germany
        {\tt\small johannes.teutsch@tum.de}.}
}
\begin{document}

\maketitle
\thispagestyle{empty}
\pagestyle{empty}

\begin{abstract}
        Bayesian Optimization (BO) is a data-driven strategy for minimizing/maximizing black-box functions based on probabilistic surrogate models. In the presence of safety constraints, the performance of BO crucially relies on tight probabilistic error bounds related to the uncertainty surrounding the surrogate model. For the case of Gaussian Process surrogates and Gaussian measurement noise, we present a novel error bound based on the recently proposed Wiener kernel regression. We prove that under rather mild assumptions, the proposed error bound is tighter than bounds previously documented in the literature, leading to enlarged safety regions. We draw upon a numerical example to demonstrate the efficacy of the proposed error bound in safe BO.
\end{abstract}

\section{INTRODUCTION}
Optimizing black-box functions with noisy and costly evaluations is a common challenge in science and engineering, e.g., consider tuning controller parameters in closed-loop operation.  \gls{bo} is a well-suited machine learning method for this task, particularly when evaluations are noisy, expensive, and time-consuming~\cite{shahriari2015taking}. Unlike traditional gradient-based optimization methods, \gls{bo} builds a probabilistic surrogate model of the objective function through data, often using a \gls{gp} \cite{Rasmussen2006}. The \gls{gp} surrogate model yields estimates of both---the function output via the mean predictor as well as the uncertainty surrounding the prediction via the \gls{gp} posterior variance. By balancing exploitation of knowledge acquired from observations and exploration of yet unknown regions of the hypothesis space, \gls{bo} can effectively find the optimum of the objective with few function evaluations. 

In many applications, only control actions or parameters shall be considered that guarantee \textit{safe} operation of the closed-loop system, specified via specific constraints. Safe \gls{bo} algorithms have been developed to address this problem, e.g., the seminal SafeOpt framework \cite{sui2015safe}. Safe \gls{bo} guarantees satisfaction of safety constraints with high probability by employing statistical bounds for the error between the surrogate model and the unknown function \cite{Srinivas2009,Chowdhury2017}. Safe \gls{bo}  finds application in automatic controller tuning \cite{berkenkamp2016safe,lubsen2024towards}, safe robot learning \cite{baumann2021gosafe}, and real-time/feedback optimization of chemical reactors \cite{Krishnamoorthy2023}, among others. We refer to \cite[Sec.~4]{fiedler2024safety} for an overview of the recent literature on safe \gls{bo}.

Crucially, the performance of safe \gls{bo} algorithms depends on the tightness of the probabilistic error bound: conservative bounds restrain the exploration of the hypothesis space, leading to slow convergence and inefficient optimization. Indeed, there are two sources of uncertainty surrounding the surrogate model -- insufficient exploration of the hypothesis space and noise corruption of observed data. As the \gls{gp} posterior variance (including the additive noise variance) considers both sources of uncertainty, commonly used approaches express the error bound in terms of this variance \cite{Srinivas2009,Chowdhury2017,abbasi2013online,Fiedler2021}. However, our recent work on \textit{Wiener kernel regression}  has proposed a framework towards untangling the noise-induced uncertainty in the data from the \gls{gp} posterior variance~\cite{faulwasser2024}. Therein, we explicitly consider measurement noise in the regression via \gls{pce} of the random variables \cite{Xiu2002}. The \gls{pce} approach yields an expression for the output variance induced by the noise in the data. We remark that \gls{pce} dates back to Norbert Wiener~\cite{Wiener1938}, it is commonly used in uncertainty quantification~\cite{Sullivan2015}, and it has seen use in systems and control, see, e.g.,~\cite{tudo:faulwasser23a}.  

In this work, we propose a novel probabilistic error bound for \gls{gp}/kernel predictors by leveraging insights from Wiener kernel regression, focusing on the case of (sub-) Gaussian measurement noise. Under mild assumptions, we show that the proposed bound is tighter than commonly used bounds documented in the literature \cite{abbasi2013online,Fiedler2021}, and thus allows for enlarged safe regions in \gls{bo}. Drawing upon a numerical example, we demonstrate that safe \gls{bo} using the proposed bound outperforms \gls{bo} based on existing bounds in terms of cumulative regret and size of the safe region.

The remainder of the paper is structured as follows: In Section~\ref{sec:prelim}, we revisit \gls{gp} and Wiener kernel regression. The considered safe \gls{bo} setup is introduced in Section~\ref{sec:safeBO}. Section~\ref{sec:main} presents the proposed uncertainty bound and a qualitative comparison to bounds from literature. In Section~\ref{sec:eval}, we evaluate safe \gls{bo} based on the proposed error bound, before we draw conclusions in Section~\ref{sec:conclusion}.

\subsubsection*{Notation} Let $(\Omega,\, \mathcal{F},\, \mathbb{P})$ be a probability space with the set of outcomes $\Omega$, $\sigma$-algebra $\mathcal{F}$, and probability measure $\mathbb{P}$ \cite{Sullivan2015}. A scalar random variable is a measurable function $V: \Omega \to \mathbb{R}$ and its realization for an outcome $\omega \in \Omega$ is $V(\omega) \in \mathbb{R}$. We write $V \in (\Omega,\, \mathcal{F},\, \mathbb{P}; \mathbb R)$. If $V$ has finite expectation $\mean{V}$ and finite variance $\var{V}$, we write $V \in L^2(\Omega,\, \mathcal{F},\, \mathbb{P};\, \mathbb{R}^{n_v})$. We call $\nu \doteq \mathbb{P} \circ V^{-1}$ the distribution of $V$, and we write $V \sim \nu$. A Gaussian distribution with mean $\mu$ and (co-)variance $\sigma^2$ is denoted via $\mathcal{N}(\mu, \sigma^2)$. We use $\mathbf{I}$ for the identity matrix and $\mathbb{I}_D \doteq [1, D] \cap \mathbb{N}$ for $D \in \mathbb{N}$.
\section{PRELIMINARIES} \label{sec:prelim}

\subsection{Gaussian Processes and Reproducing Kernels}
Consider an unknown function $f: \mathcal{X} \to \mathcal{Y}$ and a dataset
\begin{equation} \label{eq:dataset}
        \mathcal{D} = \left\{(x_i,\, y_i) \in \mathcal{X} \times \mathcal{Y} \mid i \in \mathbb{I}_D \right\}
\end{equation}
obtained from finitely many input evaluations for $\mathbf{x} = (x_1, \dots,  x_D)$. Even though GPs can be derived for vector-valued functions, for the sake of simplicity, we limit ourselves to scalar-valued functions, i.e., $\mathcal{Y} = \mathbb{R}$. We have the following standard GP assumption.
\begin{assumption}[Gaussian i.i.d. noise] \label{assum:gauss_noise}
    The input values $\mathbf{x}$ can be chosen freely in $\mathcal{X} \neq \emptyset$ whereas the corresponding labels $\mathbf{y} = \mat{y_1 & \dots & y_D}^\top \in \mathbb{R}^D$ are generated via
    \begin{equation} \label{eq:system}
        y_i = f(x_i) + M_i(\omega), \quad M_i \sim \mathcal{N}(0,\, \sigma_M^2), \quad i \in \mathbb{I}_D,
    \end{equation}
    where $M_i(\omega)$ are realizations of \gls{iid} Gaussian random variables $M_i$.
    \hfill {\small{$\Box$}}
\end{assumption}

Considering the model \eqref{eq:system} and following the weight-space view on \gls{gp}s \cite{Rasmussen2006}, we assume that the unknown function admits a decomposition $f(x) = \mathrm{w}^\top \phi(x)$ in terms of known, possibly nonlinear features $\phi: \mathbb{R}^{n_x} \to \mathbb{R}^{n_\phi}$ and unknown weights $\mathrm{w} \in \mathbb{R}^{n_\phi}$. We consider a zero-mean Gaussian prior distribution over the weights, i.e., $W \sim \mathcal{N}(0,\, \sigma_W^2 \mathbf{I})$, and we leverage the Gaussian likelihood of the labels $\mathbf{y}$ due to \eqref{eq:system}. Applying Bayes' Theorem to the weights and calculating the inner product of the resulting posterior with the features $\phi(x)$ yields the well-known mean and posterior variance estimate
\begin{subequations} \label{eq:gp}
    \begin{align} 
        \gls{kermean}(x) &= \gls{gramvec}(x)^{\top}\left(\gls{gram}+\sigma_M^2\mathbf{I}\right)^{-1}\mathbf{y}, \label{eq:kermean} \\
        \gls{kerstd}^2(x) &= k(x,x) - \gls{gramvec}(x)^{\top}\left(\gls{gram}+\sigma_M^2\mathbf{I}\right)^{-1} \gls{gramvec}(x). \label{eq:kervar}
    \end{align}
\end{subequations}
Here, $k(x,x) = \sigma_W^2 \phi(x)^\top \phi(x)$ is the kernel function, $\gls{gramvec}(x) \doteq \begin{bmatrix}
    k(x, x_1) & \ldots & k(x, x_D)
\end{bmatrix}^\top$ is the vector of kernels centered at input values $\mathbf{x}$, whereas $\mathbf{K} \doteq \left[\kappa(x_i,x_j)\right]_{i\in \mathbb{I}_D,j \in \mathbb{I}_D}$ is the Gramian constructed from the $\mathbf{x}$-data.

The kernel function is the centerpiece of any GP as it defines the prior covariance between the values of a black-box function at any two points $x, x^\prime \in \mathcal{X}$, and thus, the hypothesis space. It also defines a \gls{rkhs}. 

\begin{definition}[Kernel function] \label{def:kernel_function}
    A function $k: \mathcal{X} \times \mathcal{X} \to \mathbb{R}$ is called a kernel function (or a \emph{reproducing kernel}) if for any finite collection $\mathbf{x} \in \mathcal{X}^D$ of input values, the induced Gram matrix $\mathbf{K}$ is symmetric and positive semi-definite. \hfill {\small{$\Box$}}
\end{definition}

\begin{definition}[\gls{rkhs} \cite{Aronszajn1950, Berlinet2004}]\label{def:rkhs}
    The space $\mathcal{H}_k$ of functions $\mathcal{X} \to \mathbb{R}$ is a \gls{rkhs} if it is a Hilbert space equipped with a real-valued inner product $\langle\cdot,\cdot\rangle_{\mathcal{H}_k}$, and if there exists  $k: \mathcal{X} \times \mathcal{X} \to \mathbb{R}$ such that:
    \begin{itemize}
        \item[i)] For any $x \in \mathcal{X}$, it holds that $k(\cdot, x) \in \mathcal{H}_k$.
        \item[ii)] For any $x \in \mathcal{X}$ and any function $f \in \mathcal{H}_k$, the value $f(x)$ can be reproduced via $f(x) = \langle f, k(\cdot, x)\rangle_{\mathcal{H}_k}$. \hfill {\small{$\Box$}}
    \end{itemize}  
\end{definition}
Note that if a function $k$ in Definition~\ref{def:rkhs} exists, then it is a kernel function, cf.~Definition~\ref{def:kernel_function}. Furthermore, due to the Moore-Aronszajn Theorem \cite{Aronszajn1950}, for any Hilbert space satisfying the properties i) and ii) above, the underlying $k$ is unique. Hence, the notation $\mathcal{H}_k$.

Associated with the inner product on $\mathcal{H}_k$ is the induced \gls{rkhs} norm $\norm{\cdot}_{\mathcal{H}_k} \doteq \sqrt{\langle \cdot, \cdot \rangle_{\mathcal{H}_k}}$. This norm plays an important role in the derivation of error bounds on the mean estimate \eqref{eq:kermean}. For example, for noise-free data \eqref{eq:dataset}, there exists the following well-known result.
\begin{lemma}[Error bound with noise-free data {\cite[p.~38]{Fasshauer2011}}]
\label{lem:bound_noise_free}
    Assume that the unknown function $f: \mathcal{X} \to \mathbb{R}$ belongs to the \gls{rkhs} $\mathcal{H}_k$ and that the \gls{gp} \eqref{eq:gp} is trained using noise-free data $\mathcal{D}$, i.e., with labels obtained via \eqref{eq:system} with $\sigma_M = 0$. Then,
    \begin{equation*}
        \abs{f(x) - \mu(x)} \leq \norm{f}_{\mathcal{H}_k} \sigma_\mathrm{GP}(x), \quad \forall \, x \in \mathcal{X},
    \end{equation*}
    with $\gls{kermean}(x)$ and $\gls{kerstd}(x)$ from \eqref{eq:gp} with $\sigma_M^2 = 0$. \hfill {\small{$\Box$}}
\end{lemma}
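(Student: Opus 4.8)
The plan is to exploit the reproducing property from Definition~\ref{def:rkhs} to express both $f(x)$ and the predictor $\mu(x)$ as inner products in $\mathcal{H}_k$, and then to bound their difference via Cauchy--Schwarz. With $\sigma_M = 0$, the mean predictor \eqref{eq:kermean} specializes to $\mu(x) = \mathbf{k}(x)^\top \mathbf{K}^{-1}\mathbf{y}$ with noise-free labels $y_i = f(x_i)$. Introducing the coefficient vector $\alpha(x) \doteq \mathbf{K}^{-1}\mathbf{k}(x)$, so that $\mu(x) = \sum_{i=1}^D \alpha_i(x)\, f(x_i)$, and invoking $f(x_i) = \inprod{f, k(\cdot, x_i)}_{\mathcal{H}_k}$ together with $f(x) = \inprod{f, k(\cdot, x)}_{\mathcal{H}_k}$, I would collect the prediction error into a single inner product $f(x) - \mu(x) = \inprod{f,\, e_x}_{\mathcal{H}_k}$, where $e_x \doteq k(\cdot, x) - \sum_{i=1}^D \alpha_i(x)\, k(\cdot, x_i)$ is the residual representer.

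Applying Cauchy--Schwarz to this inner product immediately gives $\abs{f(x) - \mu(x)} \leq \norm{f}_{\mathcal{H}_k}\,\norm{e_x}_{\mathcal{H}_k}$, so the claim follows once I show $\norm{e_x}_{\mathcal{H}_k} = \sigma_{\mathrm{GP}}(x)$. To establish this identity, I would expand $\norm{e_x}_{\mathcal{H}_k}^2 = \inprod{e_x, e_x}_{\mathcal{H}_k}$ and repeatedly use the reproducing property in the form $\inprod{k(\cdot, a), k(\cdot, b)}_{\mathcal{H}_k} = k(a,b)$, which yields $\norm{e_x}_{\mathcal{H}_k}^2 = k(x,x) - 2\,\alpha(x)^\top \mathbf{k}(x) + \alpha(x)^\top \mathbf{K}\, \alpha(x)$. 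Substituting $\alpha(x) = \mathbf{K}^{-1}\mathbf{k}(x)$ collapses the last two terms into $-\mathbf{k}(x)^\top \mathbf{K}^{-1}\mathbf{k}(x)$, reproducing exactly the posterior variance \eqref{eq:kervar} evaluated at $\sigma_M = 0$.

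The main obstacle---and the only non-mechanical step---is the identification of the residual-representer norm with the GP posterior standard deviation, which is precisely the classical power-function identity of kernel interpolation. Two technical points deserve care. First, invertibility of $\mathbf{K}$ is required for $\alpha(x)$ to be well defined; I would either assume strict positive definiteness of the kernel on distinct data points or handle degeneracies via a pseudoinverse and a limiting argument. Second, the norm $\norm{f}_{\mathcal{H}_k}$ is finite precisely under the standing hypothesis $f \in \mathcal{H}_k$, which is exactly what renders the Cauchy--Schwarz bound meaningful.
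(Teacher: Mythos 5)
Your proof is correct and follows essentially the same route the paper takes when it proves the extension of this result (Lemma~\ref{lem:bound_noise_free_v2}): write $\mu(x)$ as a linear combination of reproduced values $\inprod{f, k(\cdot,x_i)}_{\mathcal{H}_k}$, collect the error into a single inner product with the residual representer, apply Cauchy--Schwarz, and expand the residual's RKHS norm to recover the posterior variance. Your specialization to $\sigma_M=0$ collapses the cross terms exactly to $\sigma_{\mathrm{GP}}^2(x)$ (the power-function identity), and your remark on invertibility of $\mathbf{K}$ is a legitimate technical caveat that the regularized case $\sigma_M>0$ in the paper's Lemma~\ref{lem:bound_noise_free_v2} sidesteps automatically.
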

Since $f$ is unknown, one usually replaces its \gls{rkhs} norm with some upper-bound estimate $B \geq \norm{f}_{\mathcal{H}_k}$.

\subsection{Wiener Kernel Regression}
The theory of \gls{rkhs} is the foundation for kernel regression, where an objective functional representing the goodness-of-fit to data $\mathcal{D}$ is minimized over the \gls{rkhs} associated with some user-specified kernel function. For example, the problem
\begin{equation} \label{eq:kernel_reg}
    \min_{\mathrm{w} \in \mathbb{R}^{n_\phi}}  \sum_{i=1}^D \left( y_i - \mathrm{w}^\top \phi(x_i)\right)^2 +\varrho^2 \norm{\mathrm{w}}^2
\end{equation}
is a kernel regression in the \gls{rkhs} associated with $k(x,x^\prime) = \phi(x)^\top \phi(x^\prime)$ which is due to the kernel trick \cite{Schoelkopf2001}. The above problem is connected to \gls{gp} regression since its solution corresponds to the mean estimate \eqref{eq:kermean} for $\varrho = \sigma_M / \sigma_W$, where without loss of generality, we consider the prior variance of the coefficients to be $\sigma_W^2 = 1$, and we set $\varrho = \sigma_M$. 

The core idea behind Wiener kernel regression~\cite{faulwasser2024} is to exploit the knowledge of the noise distribution. This leads to an improved estimate of \eqref{eq:kernel_reg} as in the regression, the noise model is subtracted from the labels $y_i$. However, since the noise realizations $M_i(\omega)$ are unknown, one can replace them only with their probabilistic representations $M_i = \sigma_M \xi_i$ for all $i \in \mathbb{I}_D$, wherein $\xi_i \sim \mathcal{N}(0,1)$ are \gls{iid} This modification turns the objective of \eqref{eq:kernel_reg} into a random variable living on the probability space $L^2(\Omega, \mathcal{F}, \mathbb{P}; \mathbb R)$ of random variables with finite expectation and variance. Minimizing the objective from~\eqref{eq:kernel_reg} in the expected value sense gives
\begin{equation*} 
    \min_{W \in L^2(\cdot)
    } \sum_{i=1}^D \mean{\left( y_i - \sigma_M \xi_i - W^\top \phi(x_i)\right)^2 +\sigma_M^2 \norm{W}^2},
\end{equation*}
where $ \norm{W}^2 = W^\top W$.
Notice that in the minimization and due to the linearity of the first-order optimality condition, the weights $W$ are lifted to $L^2(\Omega, \mathcal{F}, \mathbb{P}; \mathbb R^{D})$ with the same distribution as the vectorized noise $M_1,\, \dots,\, M_D$. 

The main insight of our previous work \cite{faulwasser2024} is that a minimizer $W^\star \in L^2(\Omega, \mathcal{F}, \mathbb{P}; \mathbb R^{D})$ of the above problem can be expressed as the series
$W^\star = \pcecoe{w}{0 \star} + \sum_{i=1}^D \pcecoe{w}{i \star} \xi_i$
with coefficients $\pcecoe{w}{j \star} \in \mathbb{R}^{n_\phi}$, $\forall \, j \in \{0\} \cup \mathbb{I}_D$. Furthermore, these deterministic coefficients can be individually (and uniquely) obtained by decomposing the above problem and using the kernel trick on each one of the $D+1$ sub-problems~\cite[Lem.~1]{faulwasser2024}.

The resulting point-estimate of the black-box function for a given $x \in \mathcal{X}$ can then be expressed as a random variable $W^{\star\top} \phi(x)$ with the mean identical to \eqref{eq:kermean} and variance 
\begin{equation} \label{eq:wievar}
    \gls{wiestd}^2(x) = \sigma_M^2\gls{gramvec}(x)^{\top}\left(\gls{gram}+\sigma_M^2\mathbf{I}\right)^{-2} \gls{gramvec}(x).
\end{equation}
Note that in terms of computational complexity, Wiener kernel regression is identical to GPs as it is centered around inverting a $D \times D$ matrix.

In \cite{faulwasser2024} 
we use \glspl{pce} to consider arbitrary non-Gaussian measurement noise in $L^2(\Omega, \mathcal{F}, \mathbb{P})$. Henceforth, however, we focus only on the Gaussian setting.

\section{PROBLEM STATEMENT: SAFE BO} \label{sec:safeBO}

\gls{bo} aims to find the optimum of a black-box function $g: \mathcal{X} \to \mathcal{Y}$ by sequentially evaluating actions $x_t$ and observing their outcomes $y_{g,t}$ via \eqref{eq:system}. It systematically builds a probabilistic surrogate model for the unknown function $f$ (e.g., via \gls{gp} regression, see \eqref{eq:gp}). This model is then used to construct an acquisition function $\alpha^g: \mathcal{X} \to \mathbb{R}$, indicating how beneficial evaluating a particular action $x_t$ might be in relation to the uncertainty surrounding the true function $g$.  Optimizing the acquisition function, balancing exploration and exploitation of $g$, gives the next action to be applied. With each iteration, the surrogate model is updated based on new observations $(x_t,\,y_{g,t})$, constituting the learning phase.

In the presence of safety constraints, the \gls{bo} framework must be adjusted to account for safe exploration \cite{sui2015safe}. Similarly to the black-box objective function $g$, the constraints are often replaced by probabilistic surrogate models stemming from \gls{gp} regression. Without loss of generality, let us consider one safety constraint of the form $f(x_t) \le 0$ with unknown constraint function $f: \mathcal{X} \to \mathbb{R}$ from which we can obtain a noisy observation $y_{f,t}$ after applying the action $x_t$ via \eqref{eq:system}.\footnote{For the general case of multiple safety constraints, we refer to \cite{Krishnamoorthy2023}.} The safety constraint defines an (apriori unknown) region $\glsd{xsafe}(f) \doteq \{x\in\mathcal{X} \mid f(x) \le 0 \}$ of safe actions $x_t$. For safe \gls{bo}, we rely on the following assumption common in safe learning \cite[Assump.~1]{Krishnamoorthy2023}.
\begin{assumption}[Safe action] \label{assum:safe}
    It holds that $\glsd{xsafe}(f) \neq \emptyset$, and a safe action $\gls{xsafe} \in \glsd{xsafe}(f)$ is known. \hfill {\small{$\Box$}}
\end{assumption}

The action $\gls{xsafe}$ is used as a safe (but suboptimal) backup if no feasible solution to the \gls{bo} problem can be found, e.g., due to large uncertainty in the surrogate model.
In order to guarantee safety with high probability, probabilistic error bounds for the surrogate model are of utmost importance. Consider the following standard definition of probabilistic error bounds for the regression error (cf.~\cite[Def.~2.1]{Lederer2019}).
\begin{definition}[Probabilistic error bound] \label{def:bound}
    Let ${f: \mathcal{X} \to \mathcal{Y}}$ be an unknown function following the measurement equation~\eqref{eq:system} and let $\gls{kermean}: \mathcal{X} \to \mathcal{Y}$ be the mean estimate \eqref{eq:kermean} based on the noisy dataset \eqref{eq:dataset}. The regression exhibits a probabilistically bounded error on a compact set $\mathcal{X} \subset \mathbb{R}^{n_x}$ if there exists a function $\gls{bound}: \mathcal{X} \to \mathbb{R}$ such that for all $x \in \mathcal{X}$
    \begin{equation} \label{eq:errorbound_def}
       \mathbb{P}\left[\abs{ f(x) - \gls{kermean}(x) } \le \gls{bound}(x)\right] \ge 1-\delta
    \end{equation}
    holds for some $\delta \in (0,\,1)$. \hfill \small{$\Box$}
\end{definition}

With an error bound as the one from Definition~\ref{def:bound}, one can define the upper confidence bound (UCB) $\alpha^f: \mathcal{X} \to \mathbb{R}$,
\begin{equation} \label{eq:ucb}
    \alpha^f(x_t) \doteq \gls{kermean}(x_t) + \gls{bound}(x_t),
\end{equation}
as the surrogate model for the safety constraint function $f(x_t)$.
In fact, UCBs of the form \eqref{eq:ucb} are also commonly used for the acquisition function in \gls{bo}, i.e., as a surrogate model $\alpha^g$ for the to-be-optimized function $g$. Such acquisition functions implicitly consider a trade-off between exploitation and exploration via the mean predictor $\gls{kermean}(x_t)$ and the confidence bound $\gls{bound}(x_t)$. We focus on acquisition functions of the form \eqref{eq:ucb} in this work. For a detailed discussion on alternative acquisition functions, we refer the interested reader to \cite{garnett2023bayesian}.

The corresponding safe \gls{bo} problem that is iteratively solved (e.g., by following an interior-point approach~\cite{Krishnamoorthy2023}) to optimally choose safe actions $x_t$ at time $t$ is formulated as
\begin{align} \label{eq:safeBO}
    x_t = &~ \arg\underset{\tilde{x} \in \mathcal{X}}{\max}~~ \alpha^g(\tilde{x}) ~~\mathrm{s.t.}~~\alpha^f(\tilde{x})\le 0.
\end{align}
If the partially revealed safe region is empty, i.e., $\glsd{xsafe}(\alpha^f) = \{x\in\mathcal{X} \mid \alpha^f(x) \le 0 \} = \emptyset$, problem~\eqref{eq:safeBO} is infeasible. In that case, we choose the safe action $\gls{xsafe}$ relying on Assumption~\ref{assum:safe}. Algorithm~\ref{alg:safeBO} summarizes the safe \gls{bo} scheme.

\begin{algorithm}[t]
\caption{Safe \gls{bo}} \label{alg:safeBO}
\begin{algorithmic}[1]
\REQUIRE Surrogate models $\alpha^f$, $\alpha^g$ \eqref{eq:ucb}, end time $T$ 
\FOR{$t \in \mathbb{I}_{T}$}
\IF{\eqref{eq:safeBO} is feasible} 
\State Choose action $x_t$ as maximizer of \eqref{eq:safeBO}
\ELSE 
\STATE Choose action $x_t = \gls{xsafe}$
\ENDIF
\STATE Observe $y_{f,t}$, $y_{g,t}$ and update surrogate models $\alpha^f$, $\alpha^g$
\ENDFOR
\end{algorithmic}
\end{algorithm}

\begin{table*}[!t]
    \renewcommand{\arraystretch}{2.2}
    \setlength{\tabcolsep}{12pt}
    \centering
    \caption{Overview of probabilistic uniform error bounds $\gls{bound}(x)$ (see Definition~\ref{def:bound}) for Gaussian measurement noise}
    \begin{tabular}{lll} \hline
        \cite[Thm.~3.11]{abbasi2013online}: & $\gls{bound}_1(x) = \left(B + \beta_1(\delta)\right) \gls{kerstd}(x)$ & $\beta_1(\delta) \doteq \sqrt{\ln{\det{\sigma_M^{-2}\gls{gram} + \mathbf{I}_D}} + 2\ln{1/\delta}}$\\
        \cite[Prop.~2]{Fiedler2021}: & $\gls{bound}_2(x) = B \gls{kerstd}(x) + \beta_2(\delta) \sigma_M  \norm{ \left(\gls{gram} + \sigma_M^2 \mathbf{I}_D\right)^{-1}\gls{gramvec}(x)}$~~~ & $\beta_2(\delta) \doteq \sqrt{D + 2\sqrt{D}\sqrt{\ln{1/\delta}} + 2\ln{1/\delta}}$\\ 
        Theorem~1: & $\gls{bound}_{\mathrm{WK}}(x) = B \sqrt{\gls{kerstd}^2(x) - \gls{wiestd}^2(x)} + \beta_{\mathrm{WK}}(\delta)\gls{wiestd}(x)$ & $\beta_{\mathrm{WK}} \doteq \sqrt{2\ln{2/\delta}}$ \\ \hline
    \end{tabular}
    \label{tab:bounds}
    \renewcommand{\arraystretch}{1}
\end{table*}

The following result extends \cite[Thm.~1]{Krishnamoorthy2023} to general confidence bounds of the form \eqref{eq:errorbound_def}.
\begin{lemma}[Safety constraint satisfaction] \label{lem:safety}
    Given Assumption~\ref{assum:safe} and a probabilistic error bound as in  \eqref{eq:errorbound_def}, the safety constraint $f(x_t)\le 0$ is satisfied with a probability of at least $1-\delta$ for the action $x_t$ chosen via Algorithm~\ref{alg:safeBO}. \hfill \small{$\Box$} 
\end{lemma}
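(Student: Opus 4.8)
The plan is to condition on the high-probability event on which the uniform error bound holds and then verify safety by a simple case distinction on the two branches of Algorithm~\ref{alg:safeBO}. Concretely, for the surrogate in use at iteration $t$, let $E$ denote the event that the uniform error bound \eqref{eq:errorbound_def} holds, i.e., that $\abs{\mu(x) - g(x)} \le \eta(x)$ for all $x \in \mathcal{X}$. By Definition~\ref{def:bound} we have $\mathbb{P}(E) \ge 1-\delta$, and the entire argument is carried out on $E$. The feature I would exploit is that this bound is \emph{uniform} over $\mathcal{X}$: since the action $x_t$ returned by the algorithm is itself a data-dependent, hence random, element of $\mathcal{X}$, only a guarantee that holds simultaneously for every $x \in \mathcal{X}$ is certain to hold at the particular realized $x_t$.

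Next I would treat the two branches separately. In the feasible branch, $x_t$ is a maximizer of \eqref{eq:safeBO}; because the log-barrier $\tau\ln{-\alpha^g(\tilde{x})}$ is only finite for $\alpha^g(\tilde{x}) < 0$, the maximizer necessarily lies in the interior of the partially revealed safe region, so $\alpha^g(x_t) \le 0$, that is, $\mu(x_t) + \eta(x_t) \le 0$ by the definition \eqref{eq:ucb} of the upper confidence bound. On $E$ the error bound gives $g(x_t) \le \mu(x_t) + \eta(x_t)$, and chaining the two inequalities yields $g(x_t) \le 0$. In the infeasible branch the algorithm sets $x_t = x_{\mathrm{safe}}$, which satisfies $g(x_{\mathrm{safe}}) \le 0$ deterministically by Assumption~\ref{assum:safe}; this branch therefore requires no reference to $E$ at all.

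Combining the two branches, on the event $E$ the constraint $g(x_t) \le 0$ holds regardless of which action the algorithm selects, so it holds with probability at least $\mathbb{P}(E) \ge 1-\delta$, as claimed. I do not expect a genuine technical obstacle: the statement is an abstraction of \cite[Thm.~1]{Krishnamoorthy2023} in which a concrete bound is replaced by the generic property \eqref{eq:errorbound_def}, so the argument is essentially bookkeeping. The one point that merits an explicit sentence is the uniformity argument from the first paragraph—a pointwise-in-$x$ error bound would not suffice, since $x_t$ is chosen in a way that depends on the same randomness governing the bound, and only the uniform guarantee decouples this dependence.
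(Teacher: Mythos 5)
Your proposal is correct and follows essentially the same argument as the paper: condition on the high-probability event that the uniform bound holds, use the log-barrier to conclude $\alpha^g(x_t)\le 0$ in the feasible branch so that $g(x_t)\le \alpha^g(x_t)\le 0$, and invoke Assumption~\ref{assum:safe} in the infeasible branch. Your explicit remark on why \emph{uniformity} over $\mathcal{X}$ is needed (since $x_t$ is data-dependent and hence random) is a point the paper's proof leaves implicit, but it does not change the route.
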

\begin{proof}
    The confidence bound \eqref{eq:errorbound_def} implies that $f(x_t) \le \alpha^f(x_t)$ holds with a probability of at least $1-\delta$ for all $x_t \in \mathcal{X}$. If $\glsd{xsafe}(\alpha^f) \neq \emptyset$, then problem \eqref{eq:safeBO} is feasible and the log-barrier term in \eqref{eq:safeBO} ensures that the chosen action $x_t$ lies in the interior of the partially revealed safe region, i.e., $x_t \in \glsd{xsafe}(\alpha^f)$. Thus, the safety constraint holds with a probability of at least $1-\delta$. If $\glsd{xsafe}(\alpha^f) = \emptyset$, then the safe action $x_t = \gls{xsafe} \in \glsd{xsafe}(f)$ is chosen, satisfying the safety constraint with probability $1$, i.e., almost surely.
\end{proof}

While Algorithm~\ref{alg:safeBO} guarantees safe \gls{bo} with high probability, its performance crucially depends on the tightness of the error bound $\gls{bound}(x)$: If $\gls{bound}(x)$ is conservative, then the safe region $\glsd{xsafe}(\alpha^f)$ expands slowly, leading to slow convergence to the optimum of the unknown function $g$. Thus, tight error bounds of the form \eqref{eq:errorbound_def} are desirable.

\begin{remark}[Overview of Existing Error Bounds] \label{rem:bounds}
The original SafeOpt algorithm \cite{sui2015safe} employed the error bound from \cite[Thm.~6]{Srinivas2009}, which only holds for bounded measurement noise and is thus not applicable to the Gaussian setting.
For $\varrho > 1$ (which is set to $\varrho = \sigma_M$ in \gls{gp} regression, see Section~\ref{sec:prelim}), an improved bound that allows for sub-Gaussian noise is presented in \cite[Thm.~2]{Chowdhury2017}, hence applicable to both the bounded and the Gaussian setting.
However, both \cite[Thm.~6]{Srinivas2009} and \cite[Thm.~2]{Chowdhury2017} rely on an upper bound on the maximum information gain, which introduces conservatism \cite{Srinivas2009}. Less conservative bounds are presented in \cite{Fiedler2021}, namely \cite[Thm.~1]{Fiedler2021} and \cite[Prop.~2]{Fiedler2021}, allowing for $\varrho > 0$ and sub-Gaussian noise. As pointed out in \cite{fiedler2024safety}, the bound in \cite[Thm.~1]{Fiedler2021} coincides with \cite[Thm.~3.11]{abbasi2013online} for $0 < \varrho \le 1$, and \cite[Thm.~3.11]{abbasi2013online} is less conservative than \cite[Thm.~1]{Fiedler2021} for $\varrho > 1$.
The bounds from \cite[Thm.~3.11]{abbasi2013online} and \cite[Prop.~1]{Fiedler2021} are listed in Table~\ref{tab:bounds} for the case of zero-mean i.i.d. Gaussian noise with variance $\sigma_M^2$.
Note that all previously discussed bounds rely on an upper bound $B$ on the \gls{rkhs} norm of the unknown function in Definition~\ref{def:bound}, i.e., $\norm{f}_{\mathcal{H}_k} \le B$.
Furthermore, it deserves to be noted that the results of  \cite{Srinivas2009, Chowdhury2017} use a Martingale setting to analyze the error bounds and the behavior of the safe BO iterates. They also analyze performance/regret in this setting. In contrast, the result of \cite{Fiedler2021} does not rely on this setting but does not quantify regret bounds. Similar to \cite{Fiedler2021}, we also do not use the Martingale setting. The extension of our analysis towards this and towards regret bounds is subject to future work.
\qquad \qquad$\square$

\end{remark}
\section{MAIN RESULTS} \label{sec:main}
In this section, we first derive a novel bound of the form \eqref{eq:errorbound_def} based on Wiener kernel regression for the case of Gaussian measurement noise. Then, we prove that the proposed bound is tighter than bounds known in the literature, cf.~Table~\ref{tab:bounds}. We first present a technical result extending Lemma~\ref{lem:bound_noise_free}.
\begin{lemma}[Extension of Lemma~\ref{lem:bound_noise_free}] \label{lem:bound_noise_free_v2}
    Consider an unknown $f: \mathcal{X} \to \mathbb{R}$ and suppose $f \in \mathcal{H}_k$. Let the \gls{gp} in \eqref{eq:gp} be trained using noise-free data $\mathcal{D}$ generated via \eqref{eq:system}, i.e., $M_i(\omega) = 0$ for all $i \in \mathbb{I}_D$. Then
    \[
        \abs{f(x) - \mu(x)} \leq \norm{f}_{\mathcal{H}_k} \sqrt{\sigma_\mathrm{GP}^2(x) - \sigma_\mathrm{WK}^2(x)}, \quad \forall \, x \in \mathcal{X}
    \]
    holds with $\mu(x)$, $\sigma_\mathrm{GP}^2(x)$, and $\sigma_\mathrm{WK}^2(x)$ from \eqref{eq:gp} and \eqref{eq:wievar}, respectively, for any $\sigma_M \ge 0$. \hfill \small{$\Box$}
\end{lemma}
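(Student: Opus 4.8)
The plan is to carry out the Cauchy--Schwarz argument behind Lemma~\ref{lem:bound_noise_free}, but with the \emph{regularized} estimator in place of the interpolant. First I would introduce the sampling operator $\Phi: \mathcal{H}_k \to \mathbb{R}^D$ defined by $\Phi g \doteq \mat{g(x_1) & \dots & g(x_D)}^\top$, which by the reproducing property has entries $(\Phi g)_i = \inprod{g, k(\cdot, x_i)}_{\mathcal{H}_k}$. Its adjoint is $\Phi^{*} c = \sum_{i=1}^D c_i\, k(\cdot, x_i)$, and one verifies directly that $\Phi \Phi^{*} = \mathbf{K}$ and $\Phi\, k(\cdot, x) = \mathbf{k}(x)$. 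Since the data are noise-free, the label vector satisfies $\mathbf{y} = \Phi f$.

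Writing $\mathbf{A} \doteq \mathbf{K} + \sigma_M^2 \mathbf{I}$, the mean estimate \eqref{eq:kermean} becomes $\mu(x) = \mathbf{k}(x)^\top \mathbf{A}^{-1} \Phi f = (T f)(x)$ with the operator $T \doteq \Phi^{*} \mathbf{A}^{-1} \Phi : \mathcal{H}_k \to \mathcal{H}_k$. As $\mathbf{A}^{-1}$ is symmetric, $T$ is self-adjoint; note, however, that -- unlike the $\sigma_M = 0$ case in Lemma~\ref{lem:bound_noise_free} -- it is no longer a projection once $\sigma_M > 0$. Invoking the reproducing property together with $T = T^{*}$, I would move the whole error onto the kernel side,
\begin{align*}
    f(x) - \mu(x) &= \inprod{f, k(\cdot,x)}_{\mathcal{H}_k} - \inprod{f, T k(\cdot,x)}_{\mathcal{H}_k} \\
    &= \inprod{f, (\mathrm{Id} - T)\, k(\cdot,x)}_{\mathcal{H}_k},
\end{align*}
where $\mathrm{Id}$ is the identity on $\mathcal{H}_k$. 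Cauchy--Schwarz then gives $\abs{f(x) - \mu(x)} \le \norm{f}_{\mathcal{H}_k}\,\norm{(\mathrm{Id} - T)\,k(\cdot,x)}_{\mathcal{H}_k}$, so the claim reduces to identifying the rightmost norm.

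The crux -- and the step I expect to demand the most care -- is the algebraic evaluation of $\norm{(\mathrm{Id} - T)\,k(\cdot,x)}_{\mathcal{H}_k}^2$. Expanding the square and using self-adjointness yields $k(x,x) - 2\,\mathbf{k}(x)^\top \mathbf{A}^{-1} \mathbf{k}(x) + \norm{T\,k(\cdot,x)}_{\mathcal{H}_k}^2$, where the cross term is evaluated via $\Phi\, k(\cdot,x) = \mathbf{k}(x)$. For the quadratic term I would use $\Phi \Phi^{*} = \mathbf{K}$ to get $\norm{T\,k(\cdot,x)}_{\mathcal{H}_k}^2 = \mathbf{k}(x)^\top \mathbf{A}^{-1} \mathbf{K}\, \mathbf{A}^{-1} \mathbf{k}(x)$, and then substitute $\mathbf{K} = \mathbf{A} - \sigma_M^2 \mathbf{I}$. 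This substitution is what makes the expression collapse, leaving
\begin{align*}
    \norm{(\mathrm{Id} - T)\,k(\cdot,x)}_{\mathcal{H}_k}^2 &= k(x,x) - \mathbf{k}(x)^\top \mathbf{A}^{-1} \mathbf{k}(x) - \sigma_M^2\, \mathbf{k}(x)^\top \mathbf{A}^{-2} \mathbf{k}(x) \\
    &= \sigma_{\mathrm{GP}}^2(x) - \sigma_{\mathrm{WK}}^2(x),
\end{align*}
by \eqref{eq:kervar} and \eqref{eq:wievar}. This establishes the bound and, as a by-product, shows $\sigma_{\mathrm{GP}}^2(x) - \sigma_{\mathrm{WK}}^2(x) \ge 0$, so the square root is well defined; setting $\sigma_M = 0$ makes $\sigma_{\mathrm{WK}} \equiv 0$ and $T$ the orthogonal projection onto $\mathrm{span}\{k(\cdot,x_i)\}_{i \in \mathbb{I}_D}$, recovering Lemma~\ref{lem:bound_noise_free}.
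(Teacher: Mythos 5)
Your proof is correct and follows essentially the same route as the paper's: the element $(\mathrm{Id}-T)\,k(\cdot,x)$ is exactly the paper's $\kappa_x - \sum_{i=1}^D \kappa_{x_i}(q_x)_i$ with $\mathbf{q}_x = (\mathbf{K}+\sigma_M^2\mathbf{I})^{-1}\mathbf{k}(x)$, and both arguments proceed by Cauchy--Schwarz in $\mathcal{H}_k$ followed by expanding that squared norm and collapsing it via $\mathbf{K} = \mathbf{A} - \sigma_M^2\mathbf{I}$. The sampling-operator formalism is a clean repackaging of the paper's explicit sums, but it introduces no genuinely different idea.
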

\begin{proof}
    We use the shorthands $\mathbf{K}_M \doteq \mathbf{K} + \sigma_M^2 \mathbf{I}$, $\mathbf{k}_x \doteq \mathbf{k}(x)$, and $\kappa_x \doteq k(\cdot, x)$. Notice that $\mu(x) = \mathbf{y}^\top \mathbf{K}_M^{-1} \mathbf{k}_x$. We set $\mathbf{q}_x \doteq \mathbf{K}_M^{-1} \mathbf{k}_x \doteq \left[
        (q_x)_1 \, \ldots \, (q_x)_D
    \right]^\top \in \mathbb{R}^D$. Since $M_i(\omega) = 0, \forall \, i \in \mathbb{I}_D$, we have that $y_i = f(x_i), \forall \, i \in \mathbb{I}_D$. Observe that $\mu(x) = \mathbf{y}^\top \mathbf{q}_x = \sum_{i=1}^D f(x_i) (q_x)_i = \sum_{i=1}^D \inprod{f, \kappa_{x_i}}_{\mathcal{H}_k} (q_x)_i$ due to the reproducing property, cf. point ii) in Definition~\ref{def:rkhs}. Using the Cauchy-Schwarz inequality we obtain
\begin{equation*}
    \begin{split}
        \abs{f(x) - \mu(x)}^2 &= \abs{\inprod{f, \kappa_x}_{\mathcal{H}_k} - \textstyle\sum\limits_{i=1}^D \inprod{f, \kappa_{x_i}}_{\mathcal{H}_k} (q_x)_i}^2 \\
        &\leq \norm{f}_{\mathcal{H}_k}^2 \norm{\kappa_x - \textstyle\sum\limits_{i=1}^D \kappa_{x_i} (q_x)_i}^2_{\mathcal{H}_k}.
    \end{split}
\end{equation*}
Expansion of the second norm in the product above yields:
\begin{equation*}
    \begin{split}
        &\norm{\kappa_x}^2_{\mathcal{H}_k} - 2 \textstyle\sum\limits_{i=1}^D \inprod{\kappa_x, \kappa_{x_i}}_{\mathcal{H}_k} (q_x)_i \, \\&\hspace*{2cm}+ \textstyle\sum\limits_{i,j=1}^D \inprod{\kappa_{x_i}, \kappa_{x_j}}_{\mathcal{H}_k} (q_x)_i (q_x)_j \\
        &~~=k(x, x) - 2 \mathbf{k}_x^\top \mathbf{q}_x \, + \mathbf{q}_x^\top \mathbf{K} \mathbf{q}_x \\
        &~~= k(x, x) - 2 \mathbf{k}_x^\top \mathbf{K}_M^{-1} \mathbf{k}_x + \mathbf{k}_x^\top \mathbf{K}_M^{-1}  \mathbf{K} \mathbf{K}_M^{-1} \mathbf{k}_x \\
        &~\,\stackrel{\text{\eqref{eq:kervar}}}{=} \sigma_\mathrm{GP}^2(x) - \mathbf{k}_x^\top \mathbf{K}_M^{-1} \mathbf{k}_x + \mathbf{k}_x^\top \mathbf{K}_M^{-1}  \mathbf{K} \mathbf{K}_M^{-1} \mathbf{k}_x \\
        &~~= \sigma_\mathrm{GP}^2(x) - \mathbf{k}_x^\top \mathbf{K}_M^{-1} \mathbf{K}_M \mathbf{K}_M^{-1} \mathbf{k}_x + \mathbf{k}_x^\top \mathbf{K}_M^{-1}  \mathbf{K} \mathbf{K}_M^{-1} \mathbf{k}_x \\
        &~~= \sigma_\mathrm{GP}^2(x) - \mathbf{k}_x^\top \mathbf{K}_M^{-1} (\mathbf{K}_M - \mathbf{K}) \mathbf{K}_M^{-1} \mathbf{k}_x \\
        &~~= \sigma_\mathrm{GP}^2(x) - \sigma_\mathrm{WK}^2(x), 
        \end{split}
\end{equation*}
where the last step follows from $\mathbf{K}_M - \mathbf{K} = \sigma_M^2\mathbf{I}$.
Hence, we have the assertion.
\end{proof}

\begin{theorem}[Wiener kernel error bound] \label{th:bound}
     Let $f: \mathcal{X} \to \mathcal{Y}$ be the unknown function from \eqref{eq:system}, let $\gls{kermean}: \mathcal{X} \to \mathcal{Y}$ be the mean estimate \eqref{eq:kermean} based on the dataset \eqref{eq:dataset}, and let $\delta \in (0,\,1)$. Then, with a probability of at least $1-\delta$, the regression error is bounded by \eqref{eq:errorbound_def} with $\gls{bound}(x) = \gls{bound}_{\mathrm{WK}}(x)$,
     \begin{equation} \label{eq:bound_proposed}
         \gls{bound}_{\mathrm{WK}}(x) \doteq B \sqrt{\gls{kerstd}^2(x) - \gls{wiestd}^2(x)} + \beta_{\mathrm{WK}}(\delta)\gls{wiestd}(x),
     \end{equation}
     where $B \ge \norm{f}_{\mathcal{H}_k}$ and $\beta_{\mathrm{WK}}(\delta) = \sqrt{2\ln{2/\delta}}$. \hfill \small{$\Box$}
\end{theorem}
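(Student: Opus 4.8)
The plan is to split the regression error into a deterministic contribution that the chosen regularized estimator would incur even on noise-free data, and a purely stochastic contribution driven by the measurement noise. Writing $\mathbf{K}_M \doteq \mathbf{K} + \sigma_M^2 \mathbf{I}$, $\mathbf{q}_x \doteq \mathbf{K}_M^{-1}\mathbf{k}(x)$, $\mathbf{g} \doteq \mat{g(x_1) & \dots & g(x_D)}^\top$, and $\mathbf{M} \doteq \mat{M_1 & \dots & M_D}^\top$, Assumption~\ref{assum:gauss_noise} gives $\mathbf{y} = \mathbf{g} + \mathbf{M}$. Since the mean predictor $\mu(x) = \mathbf{y}^\top \mathbf{q}_x$ is linear in the labels, I would write
\[
    \mu(x) - g(x) = \underbrace{\left( \mathbf{g}^\top \mathbf{q}_x - g(x) \right)}_{\text{noise-free error}} + \underbrace{\mathbf{M}^\top \mathbf{q}_x}_{\text{noise-induced error}}.
\]
The first term is exactly the error of the mean predictor trained on the noise-free labels $\mathbf{g}$ but with regularization $\sigma_M$, i.e., the quantity analyzed in Lemma~\ref{lem:bound_noise_free_v2}.

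For the noise-free error I would invoke Lemma~\ref{lem:bound_noise_free_v2} applied to $g$ with the present $\sigma_M \ge 0$, giving $\abs{\mathbf{g}^\top \mathbf{q}_x - g(x)} \le \norm{g}_{\mathcal{H}_k}\sqrt{\sigma_\mathrm{GP}^2(x) - \sigma_\mathrm{WK}^2(x)} \le B \sqrt{\sigma_\mathrm{GP}^2(x) - \sigma_\mathrm{WK}^2(x)}$, which holds deterministically and for all $x \in \mathcal{X}$, and which is precisely the first summand of $\eta_\mathrm{WK}$. Note that this already disposes of the uniformity in $x$ for the deterministic part, so only the noise part carries any probability. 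For the noise-induced error, the key observation is that $\mathbf{M}^\top \mathbf{q}_x$ is a linear functional of the jointly Gaussian vector $\mathbf{M} \sim \mathcal{N}(0,\, \sigma_M^2 \mathbf{I})$, hence itself zero-mean Gaussian with variance
\[
    \sigma_M^2\, \mathbf{q}_x^\top \mathbf{q}_x = \sigma_M^2\, \mathbf{k}(x)^\top \mathbf{K}_M^{-2} \mathbf{k}(x) = \sigma_\mathrm{WK}^2(x),
\]
i.e., its variance equals exactly the Wiener kernel variance~\eqref{eq:wievar}. I regard this variance identity as the conceptual core of the result.

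It then remains to bound a scalar Gaussian. Applying the standard two-sided tail bound $\mathbb{P}\!\left(\abs{Z} \ge t\right) \le 2\exp{-t^2/2}$ to $Z = \mathbf{M}^\top \mathbf{q}_x / \sigma_\mathrm{WK}(x) \sim \mathcal{N}(0,1)$ (the case $\sigma_\mathrm{WK}(x)=0$ being trivial) and equating the right-hand side to $\delta$ yields $t = \sqrt{2\ln{2/\delta}} = \beta_\mathrm{WK}(\delta)$, so that $\abs{\mathbf{M}^\top \mathbf{q}_x} \le \beta_\mathrm{WK}(\delta)\,\sigma_\mathrm{WK}(x)$ with probability at least $1-\delta$; combining with the deterministic bound via the triangle inequality then produces $\eta_\mathrm{WK}$.

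The step I expect to be the main obstacle is reconciling this argument with the uniform-in-$x$ reading of Definition~\ref{def:bound}. The constant $\beta_\mathrm{WK}(\delta) = \sqrt{2\ln{2/\delta}}$ is dimension-free precisely because it stems from a tail bound at a single, fixed $x$, whereas controlling $\sup_{x \in \mathcal{X}} \abs{\mathbf{M}^\top \mathbf{q}_x}/\sigma_\mathrm{WK}(x)$ simultaneously over all $x$ would, through a union or Gaussian-width argument over the curve $\{\mathbf{q}_x/\norm{\mathbf{q}_x}\}_{x \in \mathcal{X}}$ on the sphere, generically reintroduce a $\sqrt{D}$-type factor (as visible in $\beta_2(\delta)$ of~\cite{Fiedler2021}). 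I therefore anticipate that the cleanest route establishes the bound pointwise in $x$, and that the pointwise-versus-uniform distinction is the delicate point at which the dimension-independence of $\beta_\mathrm{WK}$ is effectively purchased.
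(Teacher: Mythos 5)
Your proposal is correct and follows essentially the same route as the paper's proof: the same decomposition $\mathbf{y} = \mathbf{g} + \mathbf{M}$ followed by the triangle inequality, the same invocation of Lemma~\ref{lem:bound_noise_free_v2} for the deterministic term, and the same variance identity $\var{\mathbf{M}^\top\mathbf{q}_x} = \sigma_{\mathrm{WK}}^2(x)$ followed by a Gaussian tail bound with $c = \beta_{\mathrm{WK}}(\delta)\,\sigma_{\mathrm{WK}}(x)$ (the paper invokes the general Hoeffding inequality \cite[Thm.~2.6.3]{vershynin2018high}, which for Gaussian noise is exactly the two-sided tail bound you use).

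One remark on the obstacle you flag at the end: your concern about pointwise versus uniform validity is well founded, and the paper's own proof does not resolve it either --- there, too, the Hoeffding step is applied at a single fixed $x$, and the resulting inequality \eqref{eq:bound_noise} is then asserted to yield \eqref{eq:errorbound_def}, in which the quantifier $\forall\, x \in \mathcal{X}$ sits inside the probability-$(1-\delta)$ event. Controlling $\sup_{x\in\mathcal{X}}\abs{\mathbf{M}^\top\mathbf{q}_x}/\sigma_{\mathrm{WK}}(x)$ simultaneously would, as you note, generically reintroduce a $\sqrt{D}$-type factor via a union or Gaussian-width argument over the directions $\mathbf{q}_x/\norm{\mathbf{q}_x}$, which is precisely what the dimension-free $\beta_{\mathrm{WK}}$ avoids by being a fixed-$x$ statement. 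So you have not missed a step relative to the paper; you have identified the step the paper glosses over.
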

\begin{proof}
    As in the previous proof, we use the shorthand $\mathbf{K}_M \doteq \mathbf{K} + \sigma_M^2 \mathbf{I}$. Via \eqref{eq:system}, we first decompose the available output data $\mathbf{y}$ into noise-free and noise components, $\mathbf{f}$ and $\mathbf{m}$, respectively. Put differently, $\mathbf{y} = \mathbf{f} + \mathbf{m}$. Thus, we have    
    \begin{align*}
        \abs{ f(x) - \gls{kermean}(x)} &= \abs{ f(x) - \gls{gramvec}(x)^{\top}\gls{gram}_M^{-1}\mathbf{y}}\\
        & \le \abs{ f(x) - \gls{gramvec}(x)^{\top}\gls{gram}_M^{-1}\mathbf{f} } + \abs{ \gls{gramvec}(x)^{\top}\gls{gram}_M^{-1}\mathbf{m} }.
    \end{align*}
    With the \gls{rkhs} norm bound $B \ge \norm{f}_{\mathcal{H}_k}$, it holds that 
    \begin{equation} \label{eq:bound_rkhs}
        \abs{ f(x) - \gls{gramvec}(x)^{\top}\gls{gram}_M^{-1}\mathbf{f} } \le B \sqrt{\gls{kerstd}^2(x) - \gls{wiestd}^2(x)},
    \end{equation} 
    cf.~Lemma~\ref{lem:bound_noise_free_v2}. In order to obtain an upper bound on $\abs{ \gls{gramvec}(x)^{\top}\gls{gram}_M^{-1}\mathbf{m} }$, let us define the random variable $\mathbf{M} \doteq \left[M_1 \, \ldots \, M_D\right]^\top \sim \mathcal{N}\left(\mathbf{0},\, \sigma_M^2\mathbf{I}\right)$ that generated the noise realization $\mathbf{m}$ in the data via $\mathbf{m} = \mathbf{M}(\omega)$. As $\mathbf{M}$ is Gaussian-distributed, the transformed variable $\tilde{\mathbf{M}} \doteq \gls{gramvec}(x)^{\top}\gls{gram}_M^{-1}\mathbf{M}$ is also Gaussian-distributed. 
    More specifically, we have 
    \begin{align*}
        \mean{\tilde{\mathbf{M}}} &= \gls{gramvec}(x)^{\top}\gls{gram}_M^{-1} \mean{\mathbf{M}},\\
        \var{\tilde{\mathbf{M}}} &= \gls{gramvec}(x)^{\top}\gls{gram}_M^{-1} \var{\mathbf{M}} \gls{gram}_M^{-1} \gls{gramvec}(x),
    \end{align*}
    and thus $\tilde{\mathbf{M}} \sim \mathcal{N}\left(0,\,\gls{wiestd}^2(x)\right)$ with $\gls{wiestd}^2(x)$ from \eqref{eq:wievar}.
    Thus, applying the general Hoeffding inequality \cite[Thm.~2.6.3]{vershynin2018high} for (sub-) Gaussian random variables, we obtain that $\mathbb{P}\left[\lvert\tilde{\mathbf{M}}(\omega)\rvert \le c\right] \geq 1-2\exp{-c^2/(2\gls{wiestd}^2(x))}$ for some $c > 0$. Introducing the confidence parameter $\delta \in (0,\,1)$ and choosing $c \doteq \beta_{\mathrm{WK}}(\delta) \gls{wiestd}(x)$ yields that
    \begin{equation} \label{eq:bound_noise}
        \abs{ \gls{gramvec}(x)^{\top}\gls{gram}_M^{-1}\mathbf{m} } \le \beta_{\mathrm{WK}}(\delta) \gls{wiestd}(x)
    \end{equation}
    holds with a probability of at least $1-\delta$. Combining the derived bounds \eqref{eq:bound_rkhs} and \eqref{eq:bound_noise}, the probabilistic error bound \eqref{eq:errorbound_def} holds with $\gls{bound}(x)$ as in \eqref{eq:bound_proposed}.
\end{proof}
\begin{remark}[Extension to sub-Gaussian noise]
    Due to the use of the general Hoeffding inequality \cite[Thm.~2.6.3]{vershynin2018high} in the proof of Theorem~\ref{th:bound}, the proposed bound \eqref{eq:bound_proposed} trivially extends to sub-Gaussian noise by replacing $\sigma_M^2$ in \eqref{eq:wievar} with the variance proxy of the sub-Gaussian noise. Thus, the proposed bound \eqref{eq:bound_proposed} holds for the same class of noise as considered by the bounds in \cite{Chowdhury2017,abbasi2013online,Fiedler2021}.
    \hfill \small{$\Box$}
\end{remark}

In order to prove that the proposed bound \eqref{eq:bound_proposed} is tighter than the bounds presented in Table~\ref{tab:bounds}, we first show that the Wiener kernel variance $\gls{wiestd}^2$ \eqref{eq:wievar} is upper bounded by the \gls{gp} posterior variance $\gls{kerstd}^2$ \eqref{eq:kervar} for the case of Gaussian noise.
\begin{lemma}[Wiener kernel variance bound] \label{lem:bound_wieker}
    For any input domain $\mathcal{X} \neq \emptyset$ and any kernel function $k: \mathcal{X} \times \mathcal{X} \to \mathbb{R}$ it holds that $\gls{wiestd}^2(x) \le \gls{kerstd}^2(x)$, $\forall \, x \in \mathcal{X}$. \hfill \small{$\Box$}
\end{lemma}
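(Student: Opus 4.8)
The plan is to reuse the algebraic identity already established \emph{inside} the proof of Lemma~\ref{lem:bound_noise_free_v2}. With the shorthands $\mathbf{K}_M \doteq \mathbf{K}+\sigma_M^2\mathbf{I}$, $\mathbf{k}_x \doteq \mathbf{k}(x)$, $\kappa_x \doteq k(\cdot,x)$, and $\mathbf{q}_x \doteq \mathbf{K}_M^{-1}\mathbf{k}_x$ with entries $(q_x)_i$, that proof expands an \gls{rkhs} norm and invokes the reproducing property to show
\[
    \sigma_\mathrm{GP}^2(x) - \sigma_\mathrm{WK}^2(x) = \norm{\kappa_x - \textstyle\sum_{i=1}^D \kappa_{x_i}(q_x)_i}_{\mathcal{H}_k}^2 .
\]
The key observation is that every equality in that derivation manipulates only $\mathbf{K}$, $\mathbf{K}_M$, and $\mathbf{k}_x$; the noise-free hypothesis $M_i(\omega)=0$ is used \emph{solely} to rewrite the mean $\mu(x)$, never the norm. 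Hence the displayed identity holds for any kernel function (cf.~Definition~\ref{def:kernel_function}) and any $\sigma_M > 0$, for which $\mathbf{K}_M$ is positive definite and thus invertible. Since its right-hand side is a squared norm, it is nonnegative, which is precisely $\sigma_\mathrm{WK}^2(x) \le \sigma_\mathrm{GP}^2(x)$. This is essentially the entire argument; the only real obstacle is the bookkeeping of verifying that the earlier computation never invoked $y_i = f(x_i)$, and the degenerate case $\sigma_M = 0$ is immediate since then $\sigma_\mathrm{WK}^2(x) = 0 \le \sigma_\mathrm{GP}^2(x)$.

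Should a self-contained, matrix-algebraic proof be preferred, I would instead diagonalize $\mathbf{K} = U\,\mathrm{diag}(\lambda_1,\dots,\lambda_D)\,U^\top$ with eigenvalues $\lambda_i \ge 0$ and set $\mathbf{v} \doteq U^\top \mathbf{k}_x$. Substituting the definitions \eqref{eq:kervar} and \eqref{eq:wievar} and collecting terms, the claim $\sigma_\mathrm{WK}^2(x)\le\sigma_\mathrm{GP}^2(x)$ becomes
\[
    \sum_{i=1}^D v_i^2\,\frac{\lambda_i + 2\sigma_M^2}{(\lambda_i+\sigma_M^2)^2} \le k(x,x).
\]
Here I would exploit that the augmented Gram matrix $\mat{k(x,x) & \mathbf{k}_x^\top \\ \mathbf{k}_x & \mathbf{K}}$, being itself the Gramian of $\{x,x_1,\dots,x_D\}$, is positive semi-definite; this forces $\mathbf{k}_x$ into the range of $\mathbf{K}$ and yields $k(x,x) \ge \mathbf{k}_x^\top \mathbf{K}^{\dagger}\mathbf{k}_x = \sum_{\lambda_i>0} v_i^2/\lambda_i$. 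It would then suffice to verify the per-mode inequality $\tfrac{\lambda_i+2\sigma_M^2}{(\lambda_i+\sigma_M^2)^2}\le \tfrac{1}{\lambda_i}$ for each $\lambda_i>0$, which after clearing the (positive) denominators collapses to $0 \le \sigma_M^4$.

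For this second route, the main subtlety lies in handling the null space of $\mathbf{K}$, i.e., ensuring that the components $v_i$ vanish on $\ker\mathbf{K}$ and that the pseudoinverse step is legitimate; this is exactly what the range condition from the augmented-Gram positive-semidefiniteness provides. Because the first route bypasses these case distinctions entirely and reuses a computation already present in the paper, I would adopt it as the main proof and, if desired, relegate the eigenvalue estimate to a remark for readers who prefer a kernel-free justification.
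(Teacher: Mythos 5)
Your proposal is correct, and both of your routes differ in mechanism from the paper's own proof. The paper argues by \emph{contradiction}: if $\sigma_{\mathrm{WK}}^2(x) > \sigma_{\mathrm{GP}}^2(x)$ at some $x$, then the bound of Lemma~\ref{lem:bound_noise_free_v2} would prevent $\norm{f}_{\mathcal{H}_k}$ from taking any value in $[0,\infty)$ for an arbitrary $f \in \mathcal{H}_k \setminus \{0\}$, contradicting $f \in \mathcal{H}_k$, with the degenerate case $\mathcal{H}_k = \{0\}$ (i.e., $k \equiv 0$) treated separately. Your first route turns this into a \emph{direct} argument: you observe that the norm-expansion chain inside the proof of Lemma~\ref{lem:bound_noise_free_v2} never uses the labels (the hypothesis $M_i(\omega)=0$ enters only through $\mu(x)$), so the identity $\sigma_{\mathrm{GP}}^2(x) - \sigma_{\mathrm{WK}}^2(x) = \norm{\kappa_x - \sum_{i=1}^D \kappa_{x_i}(q_x)_i}_{\mathcal{H}_k}^2$ holds unconditionally and nonnegativity is immediate. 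That is cleaner than the paper's version: it avoids the contradiction, the slightly delicate ``norm attains no real value'' reasoning, and the case split on $\mathcal{H}_k = \{0\}$; the only added care is that $\sigma_M > 0$ is needed for $\mathbf{K}_M$ to be invertible (your $\sigma_M = 0$ remark covers the rest, implicitly using $\sigma_{\mathrm{GP}}^2(x) \ge 0$) and that Moore--Aronszajn guarantees the RKHS exists for an arbitrary kernel. The paper also offers an appendix proof in feature space, reducing the difference to $\phi(x)^\top A\, \phi(x)$ with $A = \sigma_M^4(\sigma_M^2\mathbf{I} + \Phi\Phi^\top)^{-2}$ positive definite via Woodbury identities; your second route (eigendecomposition of $\mathbf{K}$, the range and generalized Schur-complement consequences of positive semi-definiteness of the augmented Gramian, and the per-mode inequality collapsing to $0 \le \sigma_M^4$) reaches the same conclusion entirely in the data space, with no features and no Woodbury, at the cost of the null-space bookkeeping you correctly flag. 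A side benefit of the paper's appendix route, which your Route~1 also recovers in RKHS form, is the strictness characterization: equality holds only where $k(x,x) = 0$, respectively where $\kappa_x$ coincides with $\sum_{i=1}^D \kappa_{x_i}(q_x)_i$.
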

\begin{proof}
    By contradiction. Assume there exists $x \in \mathcal{X}$ such that $\gls{wiestd}^2(x) > \gls{kerstd}^2(x)$. Pick any $f \in \mathcal{H}_k \setminus \{0\}$. Then, by the statement of Lemma~\ref{lem:bound_noise_free_v2}, the norm $\norm{f}_{\mathcal{H}_k}$ does not attain its value in $[0, \infty)$. Thus, $f \notin \mathcal{H}_k$. The degenerate case $\mathcal{H}_k = \{0\}$ implies that $k(x,x^\prime) = 0, \forall \, x, x^\prime \in \mathcal{X}$. This, however, means that $\gls{wiestd}^2(x) = \gls{kerstd}^2(x) = 0, \forall \, x \in \mathcal{X}$. 
\end{proof}
For the alternative proof of Lemma~\ref{lem:bound_wieker} in the feature space $\mathbb{R}^{n_\phi}$, we refer to Appendix~\hyperref[app:wieker_bound_v2]{A}.

The following result shows that, under mild conditions, the proposed error bound \eqref{eq:bound_proposed} is tighter than the bounds presented in Table~\ref{tab:bounds}.
\begin{lemma}[Tightness of Wiener error bound] \label{lem:bound_tight}
    Consider the bound $\gls{bound}_{\mathrm{WK}}$ from Theorem~\ref{th:bound} and the bounds $\gls{bound}_1$, $\gls{bound}_2$ from Table~\ref{tab:bounds}. For all $x \in \mathcal{X}$ and $\delta \in (0,\,1)$, the following holds:
    \begin{itemize}
        \item[a)] $\gls{bound}_{\mathrm{WK}}(x) < \gls{bound}_1(x)$ if $\gamma(\gls{gram}) \doteq \det{\sigma_M^{-2}\gls{gram} + \mathbf{I}} > 4$,
        \item[b)] $\gls{bound}_{\mathrm{WK}}(x) < \gls{bound}_2(x)$ if $D \ge 2$. 
        \hfill \small{$\Box$}
    \end{itemize} 
\end{lemma}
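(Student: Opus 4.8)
The plan is to reduce both inequalities a) and b) to an elementary comparison of the scalar prefactors $\beta_{\mathrm{WK}}$, $\beta_1$, and $\beta_2$, after one algebraic observation that aligns the three bounds structurally. The key observation is that, by taking the square root in \eqref{eq:wievar}, the Wiener kernel standard deviation can be written as $\sigma_{\mathrm{WK}}(x) = \sigma_M \norm{(\mathbf{K} + \sigma_M^2 \mathbf{I})^{-1}\mathbf{k}(x)}$. This is exactly the norm expression appearing in the noise term of $\eta_2$, so that $\eta_2(x) = B\sigma_{\mathrm{GP}}(x) + \beta_2(\delta)\sigma_{\mathrm{WK}}(x)$. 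Writing moreover $\eta_1(x) = B\sigma_{\mathrm{GP}}(x) + \beta_1(\delta)\sigma_{\mathrm{GP}}(x)$, all three bounds split into an RKHS-norm term (carrying the factor $B$) and a noise term (carrying the respective $\beta$).

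Next I would dispose of the RKHS-norm terms uniformly using Lemma~\ref{lem:bound_wieker}. Since $\sigma_{\mathrm{WK}}^2(x) \le \sigma_{\mathrm{GP}}^2(x)$, the radicand in $\eta_{\mathrm{WK}}$ is nonnegative and $\sqrt{\sigma_{\mathrm{GP}}^2(x) - \sigma_{\mathrm{WK}}^2(x)} \le \sigma_{\mathrm{GP}}(x)$, hence $B\sqrt{\sigma_{\mathrm{GP}}^2(x) - \sigma_{\mathrm{WK}}^2(x)} \le B\sigma_{\mathrm{GP}}(x)$, matching the $B$-term of both competitors. It then remains to compare the noise prefactors. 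Squaring gives $\beta_1^2 - \beta_{\mathrm{WK}}^2 = \ln{\gamma(\mathbf{K})} - \ln{4} = \ln{\gamma(\mathbf{K})/4}$, which is positive precisely when $\gamma(\mathbf{K}) > 4$, and $\beta_2^2 - \beta_{\mathrm{WK}}^2 = (D - 2\ln{2}) + 2\sqrt{D}\sqrt{\ln{1/\delta}}$, which is strictly positive for $D \ge 2$ since $D - 2\ln{2} \ge 2 - 2\ln{2} > 0$ and $\ln{1/\delta} > 0$ for $\delta \in (0,1)$. Thus $\beta_{\mathrm{WK}} < \beta_1$ under a) and $\beta_{\mathrm{WK}} < \beta_2$ under b).

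The two assertions then follow by chaining these estimates. For a), invoking $\sigma_{\mathrm{WK}}(x) \le \sigma_{\mathrm{GP}}(x)$ once more, $\eta_{\mathrm{WK}}(x) \le B\sigma_{\mathrm{GP}}(x) + \beta_{\mathrm{WK}}\sigma_{\mathrm{WK}}(x) \le (B + \beta_{\mathrm{WK}})\sigma_{\mathrm{GP}}(x) < (B + \beta_1)\sigma_{\mathrm{GP}}(x) = \eta_1(x)$. For b), $\eta_{\mathrm{WK}}(x) \le B\sigma_{\mathrm{GP}}(x) + \beta_{\mathrm{WK}}\sigma_{\mathrm{WK}}(x) < B\sigma_{\mathrm{GP}}(x) + \beta_2\sigma_{\mathrm{WK}}(x) = \eta_2(x)$. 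Note that a) merges everything onto $\sigma_{\mathrm{GP}}$ because $\eta_1$ places both $B$ and $\beta_1$ on $\sigma_{\mathrm{GP}}$, whereas b) compares the two noise terms directly.

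The step needing the most care---and the place where the strict ``for all $x$'' formulation is delicate---is the final strictness. In a) the last inequality is strict only if $\sigma_{\mathrm{GP}}(x) > 0$, and in b) only if $\sigma_{\mathrm{WK}}(x) > 0$. For $\sigma_M > 0$ a push-through identity yields $\sigma_{\mathrm{GP}}^2(x) = \sigma_M^2 \phi(x)^\top(\Phi^\top\Phi + \sigma_M^2\mathbf{I})^{-1}\phi(x)$, with $\Phi$ the feature matrix whose rows are $\phi(x_i)^\top$, so $\sigma_{\mathrm{GP}}(x) = 0$ only where $k(x,x) = 0$, and likewise $\sigma_{\mathrm{WK}}(x) = 0$ only where $\mathbf{k}(x) = 0$; at such degenerate inputs all three bounds collapse to the common value and hold with equality. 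I would therefore either restrict the strict statement to non-degenerate $x$, or simply record that strictness holds wherever the corresponding posterior variance does not vanish, which is the regime relevant for safe exploration.
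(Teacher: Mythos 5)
Your proposal is correct and takes essentially the same route as the paper: bound the $B$-term via Lemma~\ref{lem:bound_wieker}, identify $\sigma_M \norm{\left(\mathbf{K}+\sigma_M^2\mathbf{I}\right)^{-1}\mathbf{k}(x)} = \sigma_{\mathrm{WK}}(x)$ in $\eta_2$, and reduce both claims to the scalar comparisons $\beta_{\mathrm{WK}}<\beta_1$ (from $\ln{\gamma(\mathbf{K})}>\ln{4}$) and $\beta_{\mathrm{WK}}<\beta_2$ (from $2>\ln{4}$). Two of your refinements in fact improve on the paper's write-up: you state the $B$-term inequality $B\sqrt{\sigma_{\mathrm{GP}}^2(x)-\sigma_{\mathrm{WK}}^2(x)} \le B\,\sigma_{\mathrm{GP}}(x)$ in the correct direction (the paper's proof writes it reversed, evidently a typo), and you flag that strictness degenerates at points where $\sigma_{\mathrm{GP}}(x)=0$ in case a) or $\sigma_{\mathrm{WK}}(x)=0$ in case b), a caveat the paper itself only concedes in its appendix remark.
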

\begin{proof}
    Since $B \gls{kerstd}(x) \ge B \sqrt{\gls{kerstd}^2(x) - \gls{wiestd}^2(x)}$ holds, it is sufficient to compare the remaining terms. First, note that the parameter $\beta_{\mathrm{WK}}$ from Theorem~\ref{th:bound} can be expressed as $\beta_{\mathrm{WK}}(\delta) = \sqrt{\ln{4} + 2\ln{1/\delta}}$.
       
    a) $\gls{bound}_{\mathrm{WK}}(x) < \gls{bound}_1(x)$: For $\gamma(\gls{gram}) > 4$, we have $\ln{\gamma(\gls{gram})} > \ln{4}$ and thus $\beta_1(\delta) > \beta_{\mathrm{WK}}(\delta)$ for all $\delta \in (0,\,1)$. As $\gls{wiestd}^2(x) \le \gls{kerstd}^2(x)$ holds via Lemma~\ref{lem:bound_wieker}, the assertion follows.        
    
    b) $\gls{bound}_{\mathrm{WK}}(x) < \gls{bound}_2(x)$: For $D \ge 2$, it holds that $\beta_2(\delta) > \sqrt{2 + 2\ln{1/\delta}}$. Since $2 > \ln{4}$, we have $\beta_2(\delta) > \beta_{\mathrm{WK}}(\delta)$ for $\delta \in (0,\,1)$. Moreover, we can write $\gls{wiestd}^2(x)$ from \eqref{eq:wievar} as
    \begin{equation*} 
        \gls{wiestd}^2(x) = \sigma_M^2  \norm{ \left(\gls{gram} + \sigma_M^2 \mathbf{I}\right)^{-1}\gls{gramvec}(x)}^2,
    \end{equation*}
    from which we obtain the correspondence
    \begin{equation} \label{eq:wievar_connection}
        \sigma_M  \norm{ \left(\gls{gram} + \sigma_M^2 \mathbf{I}\right)^{-1}\gls{gramvec}(x)} = \gls{wiestd}(x).
    \end{equation}
    Thus, the assertion follows.

\end{proof}
\begin{remark}[Satisfying the conditions of Lemma~\ref{lem:bound_tight}]~\\
    In practice, the condition $\gamma(\gls{gram}) > 4$ for Lemma~\ref{lem:bound_tight}.b) is easily satisfied  for large enough $D$: Denote the eigenvalues of $\gls{gram}$ as $\lambda_1,\,\dots,\,\lambda_D$. As $\gls{gram}$ is positive semidefinite, $\lambda_i \ge 0$ for $i \in \mathbb{I}_D$. Thus, we can write $\gamma(\gls{gram}) = \prod_{i=1}^{D}(\sigma_M^{-2}\lambda_i + 1)$, which is a monotonically increasing function in $D$ as $\sigma_M^{-2}\lambda_i \ge 0$, with a minimum value of $\sigma_M^{-2} \min_{x\in\mathcal{X}} k(x,\,x) + 1$. The increase of $\gamma(\gls{gram})$ with respect to $D$ is related to the information gain: $\gamma(\gls{gram})$ increases if the newly added data point yields a nonzero eigenvalue (i.e., the new data point is not redundant to previous data points), which is practically the case when exploring the hypothesis space to reduce uncertainty, e.g., in the context of \gls{bo}.

    Lemma~\ref{lem:bound_tight}.c) also holds for $D = 1$ if one restricts the domain of the confidence parameter $\delta$ to the practically reasonable range $\delta \in (0,\,0.5)$, since then $\beta_2(\delta) > \sqrt{1 + 2\sqrt{\ln{2}} + 2\ln{1/\delta}} > \beta_{\mathrm{WK}}(\delta)$. Interestingly, the Wiener kernel variance $\gls{wiestd}^2(x)$ from \eqref{eq:wievar} is present in the bound $\gls{bound}_2(x)$ from Table~\ref{tab:bounds} \cite[Prop.~2]{Fiedler2021} due to the correspondence \eqref{eq:wievar_connection}, but has not been specified as such.
    \hfill \small{$\Box$}
\end{remark}

Using Lemma~\ref{lem:bound_tight}, we can show that the usage of the proposed confidence bound \eqref{eq:bound_proposed} in Theorem~\ref{th:bound} allows for enlarged safe regions in \gls{bo}.
\begin{lemma}[Enlarged safe region] \label{lem:tightsafety}
    Denote as $\alpha^f_{\mathrm{WK}}$ the UCB \eqref{eq:ucb} of the safety constraint function $f$ using the error bound $\gls{bound}_{\mathrm{WK}}$ from Theorem~\ref{th:bound}. 
    Let $\alpha^f_{1}$, $\alpha^f_{2}$ denote the UCBs \eqref{eq:ucb} using the error bounds $\gls{bound}_1$, $\gls{bound}_2$ from Table~\ref{tab:bounds}. Under the conditions in Lemma~\ref{lem:bound_tight}, the safe region $\glsd{xsafe}(\alpha^f_{\mathrm{WK}})$ satisfies $\glsd{xsafe}(\alpha^f_i) \subset \glsd{xsafe}(\alpha^f_{\mathrm{WK}})~\forall x \in \mathcal{X}$, $\delta \in (0,\,1)$, and $i \in \mathbb{I}_2$.  \hfill \small{$\Box$}
\end{lemma}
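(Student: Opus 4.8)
The plan is to reduce the claimed set inclusion to a purely pointwise comparison of the three upper confidence bounds. First I would record that $\alpha^g_{\mathrm{WK}}$, $\alpha^g_1$, and $\alpha^g_2$ are all built on the \emph{same} mean predictor $\mu$ from \eqref{eq:kermean} and differ only in their additive error terms, i.e. $\alpha^g_i(x) = \mu(x) + \eta_i(x)$ for $i \in \mathbb{I}_2$ and $\alpha^g_{\mathrm{WK}}(x) = \mu(x) + \eta_{\mathrm{WK}}(x)$, where $\eta_1,\eta_2,\eta_{\mathrm{WK}}$ are exactly the bounds compared in Lemma~\ref{lem:bound_tight}. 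Under the hypotheses carried over from that lemma, namely $\gamma(\mathbf{K}) > 4$ when $i=1$ and $D \ge 2$ when $i=2$, Lemma~\ref{lem:bound_tight} supplies the strict pointwise inequality $\eta_{\mathrm{WK}}(x) < \eta_i(x)$ for every $x \in \mathcal{X}$ and every $\delta \in (0,1)$.

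Next I would add the common mean $\mu(x)$ to both sides, which preserves the strict inequality and yields $\alpha^g_{\mathrm{WK}}(x) < \alpha^g_i(x)$ for all $x \in \mathcal{X}$ and $i \in \mathbb{I}_2$. The inclusion then falls out immediately from the definition of the revealed safe regions: if $x \in \mathcal{X}_{\mathrm{safe}}(\alpha^g_i)$, meaning $\alpha^g_i(x) \le 0$, then $\alpha^g_{\mathrm{WK}}(x) < \alpha^g_i(x) \le 0$, so $x \in \mathcal{X}_{\mathrm{safe}}(\alpha^g_{\mathrm{WK}})$. This establishes $\mathcal{X}_{\mathrm{safe}}(\alpha^g_i) \subseteq \mathcal{X}_{\mathrm{safe}}(\alpha^g_{\mathrm{WK}})$ directly, and is the heart of the argument; no new analytic estimate is required beyond Lemma~\ref{lem:bound_tight}.

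To justify the \emph{strict} inclusion written in the statement, I would exploit the strict pointwise gap together with continuity of $\alpha^g_{\mathrm{WK}}$ on the compact set $\mathcal{X}$. Any boundary point $x$ of $\mathcal{X}_{\mathrm{safe}}(\alpha^g_i)$ satisfies $\alpha^g_i(x) = 0$, hence $\alpha^g_{\mathrm{WK}}(x) < 0$, so $x$ sits strictly inside $\mathcal{X}_{\mathrm{safe}}(\alpha^g_{\mathrm{WK}})$; by continuity a whole neighbourhood of $x$ belongs to $\mathcal{X}_{\mathrm{safe}}(\alpha^g_{\mathrm{WK}})$ while not lying in $\mathcal{X}_{\mathrm{safe}}(\alpha^g_i)$, giving proper inclusion. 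The only genuine obstacle is exactly this strictness: the neighbourhood argument gains mass only in the non-degenerate case where such a boundary point exists in $\mathcal{X}$. In the degenerate corners — both safe regions empty, or both equal to all of $\mathcal{X}$ — the inclusion collapses to equality, so I would either phrase the conclusion with $\subseteq$ or explicitly exclude these cases from the statement.
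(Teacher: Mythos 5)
Your proof is correct and follows essentially the same route as the paper: both arguments reduce the set inclusion to the pointwise comparison $\eta_{\mathrm{WK}}(x) < \eta_i(x)$ supplied by Lemma~\ref{lem:bound_tight}, using the fact that all three upper confidence bounds \eqref{eq:ucb} share the same mean $\mu(x)$. Your closing discussion of strictness is a refinement the paper omits: the paper's proof treats the strict pointwise inequality $\alpha^g_{\mathrm{WK}}(x) < \alpha^g_i(x)$ as sufficient for the proper inclusion ``$\subset$'', which, as you correctly note, only yields ``$\subseteq$'' in the degenerate cases (both revealed safe regions empty, or both equal to all of $\mathcal{X}$), so the conclusion should either be read as non-strict inclusion or those cases should be excluded.
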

\begin{proof}
    The region $\glsd{xsafe}(\alpha^f)$ is defined via the constraint $\alpha^f(x) \le 0$. Therefore, $\glsd{xsafe}(\alpha^f_i) \subset \glsd{xsafe}(\alpha^f_{\mathrm{WK}})$ is satisfied if $\alpha^f_{\mathrm{WK}}(x) < \alpha^f_i(x)$ holds $\forall x \in \mathcal{X}$. From \eqref{eq:ucb}, we have 
    \begin{align*}
        \alpha^f_{\mathrm{WK}}(x) < \alpha^f_i(x) & \Leftrightarrow \gls{kermean}(x) + \gls{bound}_{\mathrm{WK}}(x) < \gls{kermean}(x) + \gls{bound}_i(x), \\
        & \Leftrightarrow \gls{bound}_{\mathrm{WK}}(x) < \gls{bound}_i(x),
    \end{align*}
    which holds for all $x \in \mathcal{X}$, $\delta \in (0,\,1)$, and $i \in \mathbb{I}_2$ via Lemma~\ref{lem:bound_tight}. This concludes the proof.
\end{proof}

Theorem~\ref{th:bound} and Lemma~\ref{lem:bound_tight} show that the separation of aleatoric and epistemic uncertainty via Wiener kernel regression allows for the derivation of probabilistic error bounds that are tighter than common bounds from related works. As the results of this section are of a qualitative nature, we evaluate the numerical improvement of using the proposed bound \eqref{eq:bound_proposed} over bounds in Table~\ref{tab:bounds} in the next section.
\section{NUMERICAL EXAMPLE} \label{sec:eval}
We evaluate the proposed error bound from Theorem~\ref{th:bound} in the context of safe \gls{bo}. The source code is available online at \url{https://github.com/OptCon/SafeBO_WKR}.

In~\cite{faulwasser2024,Beckers2016} the scalar system 
\begin{equation*}
    y_t = g(x_t) + M_t(\omega) \doteq 0.01 x_t^3 - 0.2 x_t^2 + 0.2 x_t + M_t(\omega),
\end{equation*}
where $M_t \sim \mathcal{N}\left(0,\,\sigma_W^2\right)$ is additive i.i.d. noise with $\sigma_W = 1$ is considered. We have the safety constraint $f(x_t) \doteq -g(x_t) + g_{\min} \le 0$ with $g_{\min} \doteq -5.168$ and the input domain is $\mathcal{X} \doteq \{x \in \mathbb{R} \mid \abs{x} \le 5\}$. The safe action $\gls{xsafe} = 5$ is known. For \gls{gp} regression, we use a squared-exponential kernel $k(x,\,x') \doteq \sigma_{\mathrm{SE}}^2 \exp{-\norm{x-x'}^2/\big(2l_{\mathrm{SE}}^2\big)}$ with $\sigma_{\mathrm{SE}} = 4.21$ and $l_{\mathrm{SE}}=3.59$ as in \cite{Beckers2016}. Note that these hyperparameters are fixed and are not updated throughout Algorithm~\ref{alg:safeBO}.

The goal is to find the maximum $g_{\mathrm{opt}} = 0.0513$ of the unknown function $g$ via \gls{bo}: at the learning step~$t$, a new action $x_t$ is chosen via Algorithm~\ref{alg:safeBO}. For the safe \gls{bo} problem~\eqref{eq:safeBO}, we use the acquisition function \eqref{eq:ucb} for different confidence bounds $\gls{bound}(x_t)$ detailed below. The constraint in \eqref{eq:safeBO} guarantees that the safety constraint $g(x_t) \ge g_{\min}$ is satisfied with high confidence $1-\delta$ (see Lemma~\ref{lem:safety}), where $\delta = 0.001$.

In a Monte Carlo simulation of $100$ runs, the safe \gls{bo} algorithm is applied for $t\in\mathbb{I}_{100}$ subject to randomly drawn noise realizations $M_t(\omega)$. With the discussion from Remark~\ref{rem:bounds} in mind, we compare the performance of the safe \gls{bo} algorithm under the usage of the bounds from Table~\ref{tab:bounds}:
\begin{itemize}
    \item Using $\gls{bound}_{\mathrm{WK}}(x)$ according to Theorem~\ref{th:bound}.
    \item Using $\gls{bound}_1(x)$ from
    \cite[Thm.~3.11]{abbasi2013online}.
    \item Using  $\gls{bound}_2(x)$ from
    \cite[Prop.~2]{Fiedler2021}.
\end{itemize}
We use the \gls{rkhs} norm bound $B = 3$ for all methods. This estimate is found by fitting a function $\sum_{i=1}^n \alpha_i k(\cdot, \tilde{x}_i)$ to the true function $g$ using a selection of points $\lbrace \tilde{x}_i \rbrace_{i=1}^n$ with Gram matrix $\tilde{\mathbf{K}}$. If $g$ is approximated sufficiently well with $\alpha^\star = [\alpha^\star_i]_{i=1}^n$, then the result is set to $B \doteq\sqrt{\alpha^{\star\top} \tilde{\mathbf{K}} \alpha^\star}$.  

The top of Figure~\ref{fig:regret} shows the cumulative regret
\begin{equation}
    R_t \doteq \textstyle\sum_{i=1}^t \left(g_{\mathrm{opt}} - g(x_t)\right)
\end{equation}
of the different methods over the learning steps, as well as the $75\%$ confidence intervals over the Monte Carlo runs.
When compared to the proposed method using $\gls{bound}_{\mathrm{WK}}$, the comparison methods using $\gls{bound}_{1}$ and $\gls{bound}_{2}$ emit a mean increase in cumulative regret at end time $t=100$ of $116.92\%$ and $210.22\%$, respectively. The bottom of Figure~\ref{fig:regret} illustrates the evolution of the safe region $\glsd{xsafe}$ for the different methods over the learning steps. It can be seen that the proposed method yields the fastest increase of size of $\glsd{xsafe}$, converging after $20$ learning steps. In contrast, the comparison methods not only take significantly longer to converge, but also end in a smaller safe region than the proposed method (cf. Lemma~\ref{lem:tightsafety}). 

\begin{figure}[!t]
    \centering
    \includegraphics[page = 2, clip, trim=2cm 16.7cm 10cm 5.2cm]{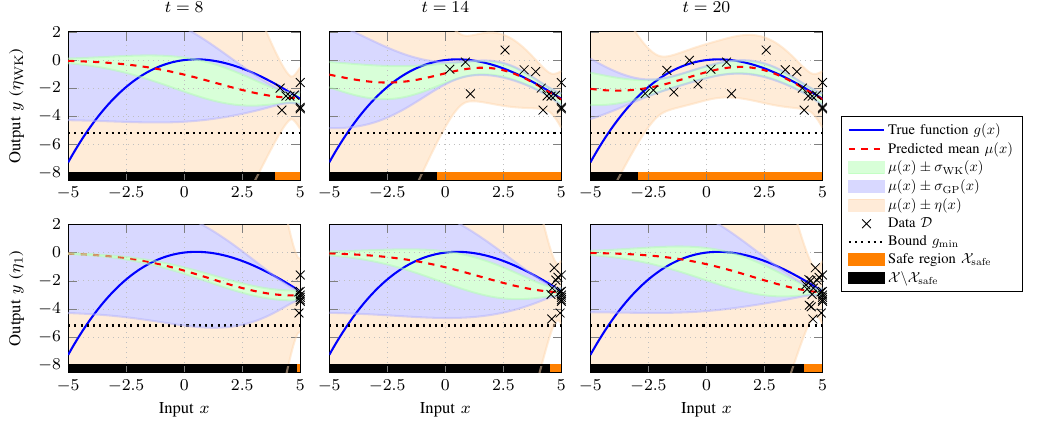}
    \caption{Evolution of the cumulative regret and the size of the safe region $\glsd{xsafe}$ over learning steps for all comparison methods. Shaded areas show the $75\%$-confidence intervals over all runs. Thick lines represent the mean.}
    \label{fig:regret}
\end{figure}

Figure~\ref{fig:beta} illustrates the evolution of the bound parameters $\beta_{\mathrm{WK}}$, $\beta_1$, and $\beta_2$ (see Table~\ref{tab:bounds}) over the learning steps. For the considered example, the conditions detailed in Lemma~\ref{lem:bound_tight} are satisfied for all $t \ge 1$ and, thus, the proposed bound $\gls{bound}_{\mathrm{WK}}$ is the tightest. While Lemma~\ref{lem:bound_tight} is only of qualitative nature, the quantitative benefit of using the proposed bound becomes evident: In comparison to the bound parameters $\beta_1$, $\beta_2$ from the comparison methods, the proposed bound parameter $\beta_{\mathrm{WK}}$ is independent of the data size and the chosen actions $x_t$, and thus constant over the learning steps. The parameter $\beta_1$ depends on the actually obtained data via the Gram matrix $\gls{gram}$ and thus varies in each run. The parameter $\beta_2$ grows with $\sqrt{t}$ and therefore faster than the other bound parameters. That is the reason why the method based on $\gls{bound}_{2}$ performs worse than the method based on $\gls{bound}_{1}$, despite the use of the Wiener kernel variance \eqref{eq:wievar} in the bound $\gls{bound}_2$, see \eqref{eq:wievar_connection}.

\begin{figure}[!t]
    \centering
    \includegraphics[page = 2, clip, trim=2cm 5.6cm 10cm 18.45cm]{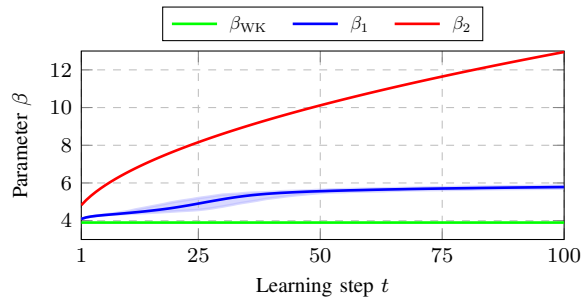}
    \caption{Evolution of the bound parameters $\beta_{\mathrm{WK}}(\delta)$ (see Theorem~\ref{th:bound}) and $\beta_1(\delta)$, $\beta_2(\delta)$ (see Table~\ref{tab:bounds}) over learning steps $t$ for $\delta = 0.001$. The shaded area shows the maximum and minimum of $\beta_1(\delta)$ over all runs.}
    \label{fig:beta}
\end{figure}

\begin{figure*}[!ht]
    \centering
    \includegraphics[page = 1, clip, trim=2cm 10.4cm 2cm 10.4cm]{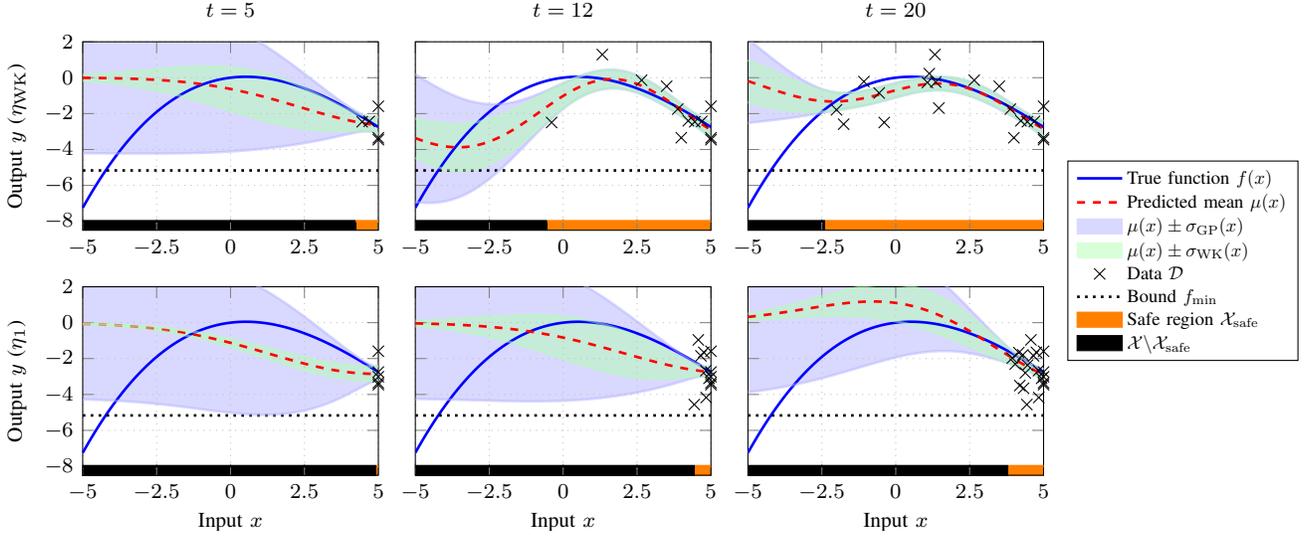}
    \caption{Comparison of the learning progress based on the proposed error bound $\gls{bound}_{\mathrm{WK}}$ from Theorem~\ref{th:bound} (upper row) and based on the error bound $\gls{bound}_{1}$ from Table~\ref{tab:bounds} \cite[Th.~3.11]{abbasi2013online} (lower row) for learning steps $t=8$ (left column), $t=14$ (middle column), and $t=20$ (right column).}
    \label{fig:learning}
\end{figure*}

Finally, Figure~\ref{fig:learning} illustrates the learning progress at learning steps $t=8$, $t=14$, and $t=20$ of the proposed method using $\gls{bound}_{\mathrm{WK}}$ compared to the comparison method based on $\gls{bound}_{1}$ in more detail for one exemplary run. The proposed method quickly manages to extend the safe region due to the tighter confidence bound from Theorem~\ref{th:bound}, while the more conservative bound $\gls{bound}_{1}$ of the comparison method results in more conservative learning progress. Figure~\ref{fig:learning} also shows the confidence tubes spanned by the \gls{gp} posterior variance $\gls{kerstd}^2$ \eqref{eq:kervar} and the Wiener kernel variance $\gls{wiestd}^2$ \eqref{eq:wievar}. The standard deviation $\gls{wiestd}$ is not only smaller than $\gls{kerstd}$ (Lemma~\ref{lem:bound_wieker}), but especially smaller at domains that are far away from observed data points. This indicates that the proposed bound~\eqref{eq:bound_proposed} is especially tight at yet unexplored regions in the hypothesis space.
\section{CONCLUSIONS} \label{sec:conclusion}
In this work, we presented a novel probabilistic error bound for \gls{gp} regression subject to Gaussian measurement noise by leveraging Wiener kernel regression. Under mild assumptions on the collected data, the proposed bound is shown to be tighter than commonly used bounds documented in the literature. As a consequence, using the proposed bound in safe \gls{bo} leads to enlarged safe regions and thus improved performance of the optimization scheme. The potential of safe \gls{bo} utilizing the proposed bound has been demonstrated in simulation, yielding numerical evidence for the favorable tightness of the proposed bound compared to related approaches.

While we have focused on Gaussian noise in this work, Wiener kernel regression enables the derivation of probabilistic error bounds for the more general case of non-Gaussian noise from arbitrary ${L}^2$ probability spaces. A thorough investigation of this general case and a regret analysis is subject of future work.

\section*{APPENDIX}
\subsection{Proof of Lemma~\ref{lem:bound_wieker} in the feature space}
\label{app:wieker_bound_v2}
\begin{proof}
    The difference between \eqref{eq:kervar} and \eqref{eq:wievar} yields
    \[
    \begin{split}
        &\gls{kerstd}^2(x) - \gls{wiestd}^2(x) \\
        &~= k(x,x) - \mathbf{k}(x)^\top (\sigma_M^2 \mathbf{I} + \gls{gram})^{-1} \mathbf{k}(x) \\
        &~~~~ - \sigma_M^2 \mathbf{k}(x)^\top (\sigma_M^2 \mathbf{I} + \gls{gram})^{-2} \mathbf{k}(x) \\
        &~= \phi(x)^\top \phi(x) - \phi(x)^\top \Phi (\sigma_M^2 \mathbf{I} + \Phi^\top \Phi)^{-1} \Phi^\top \phi(x) \\
        &~~~~ - \sigma_M^2 \phi(x)^\top \Phi (\sigma_M^2 \mathbf{I} + \Phi^\top \Phi)^{-2} \Phi^\top \phi(x) = \phi(x)^\top A \phi(x),
    \end{split}
    \]
    whereby $\Phi \doteq \left[\phi(x_1) \, \ldots \, \phi(x_D)\right] \in \mathbb{R}^{n_\phi \times D}$ and $A$ reads
    \begin{equation*}
        A \doteq \mathbf{I} - \Phi (\sigma_M^2 \mathbf{I} + \Phi^\top \Phi)^{-1} \Phi^\top - \sigma_M^2 \Phi (\sigma_M^2 \mathbf{I} + \Phi^\top \Phi)^{-2} \Phi^\top.
    \end{equation*}
    We prove the assertion by certifying that $A$ is positive semi-definite. Recall that as a consequence of the Woodbury matrix identity, for any matrices $U$, $V$, and $P$ of suitable dimensions, as well as for any scalar $a > 0$, the following identities hold:
        i)   $a (a \mathbf{I} + P)^{-1} = \mathbf{I} - P (a \mathbf{I} + P)^{-1}$,
        ii)  $a (a \mathbf{I} + UV)^{-1} = \mathbf{I} - U (a \mathbf{I} + VU)^{-1} V$, and
        iii) $(a \mathbf{I} + UV)^{-1} U = U (a \mathbf{I} + VU)^{-1}$.
    By taking $U = \Phi$, $V = \Phi^\top$, $a = \sigma_M^2$, and $P = UV = \Phi \Phi^\top$, the matrix $A$ can be simplified as
    \begin{align*}
        A &= \sigma_M^2 (\sigma_M^2 \mathbf{I} + \Phi \Phi^\top)^{-1} \\
        &~~~ - \sigma_M^2 (\sigma_M^2 \mathbf{I} + \Phi \Phi^\top)^{-1} \Phi \Phi^\top (\sigma_M^2 \mathbf{I} + \Phi \Phi^\top)^{-1} \\
        &= \sigma_M^2 (\sigma_M^2 \mathbf{I} + P)^{-1} \left( \mathbf{I} - P (\sigma_M^2 \mathbf{I} + P)^{-1} \right) \\
        &= \sigma_M^4 (\sigma_M^2 \mathbf{I} + P)^{-2} = \sigma_M^4 (\sigma_M^2 \mathbf{I} + \Phi \Phi^\top)^{-2}.
    \end{align*}
    Thus, $A$ is positive-definite, which completes the proof.
\end{proof}

\begin{remark}[Strictness of Wiener kernel variance bound]
    Since the matrix $A$ in the above proof is positive-definite, the inequality $\gls{wiestd}^2(x) \le \gls{kerstd}^2(x)$, $\forall \, x \in \mathcal{X}$ is strict, except for the points $\{x \in \mathcal{X} \mid \phi(x) = 0\}$, or, 
    equivalently, $\{x \in \mathcal{X} \mid k(x,x) = 0\}$. \hfill \small{$\Box$}
\end{remark}

\addtolength{\textheight}{-15.3cm} 

\section*{ACKNOWLEDGMENT}
The authors thank Jannis Lübsen and Johannes Köhler for valuable discussions and feedback.  

\bibliographystyle{IEEEtran}
\bibliography{literature}

\begin{thebibliography}{10}
\providecommand{\url}[1]{#1}
\csname url@samestyle\endcsname
\providecommand{\newblock}{\relax}
\providecommand{\bibinfo}[2]{#2}
\providecommand{\BIBentrySTDinterwordspacing}{\spaceskip=0pt\relax}
\providecommand{\BIBentryALTinterwordstretchfactor}{4}
\providecommand{\BIBentryALTinterwordspacing}{\spaceskip=\fontdimen2\font plus
\BIBentryALTinterwordstretchfactor\fontdimen3\font minus \fontdimen4\font\relax}
\providecommand{\BIBforeignlanguage}[2]{{%
\expandafter\ifx\csname l@#1\endcsname\relax
\typeout{** WARNING: IEEEtran.bst: No hyphenation pattern has been}%
\typeout{** loaded for the language `#1'. Using the pattern for}%
\typeout{** the default language instead.}%
\else
\language=\csname l@#1\endcsname
\fi
#2}}
\providecommand{\BIBdecl}{\relax}
\BIBdecl

\bibitem{shahriari2015taking}
B.~Shahriari, K.~Swersky, Z.~Wang, R.~P. Adams, and N.~De~Freitas, ``Taking the human out of the loop: {A} review of {B}ayesian optimization,'' \emph{Proceedings of the IEEE}, vol. 104, no.~1, pp. 148--175, 2015.

\bibitem{Rasmussen2006}
C.~E. Rasmussen and C.~K.~I. Williams, \emph{{Gaussian Processes for Machine Learning}}, 3rd~ed., ser. Adaptive computation and machine learning.\hskip 1em plus 0.5em minus 0.4em\relax Cambridge, Mass. [u.a.]: MIT Press, 2006.

\bibitem{sui2015safe}
Y.~Sui, A.~Gotovos, J.~Burdick, and A.~Krause, ``Safe exploration for optimization with {G}aussian processes,'' in \emph{International Conference on Machine Learning}.\hskip 1em plus 0.5em minus 0.4em\relax PMLR, 2015, pp. 997--1005.

\bibitem{Srinivas2009}
N.~Srinivas, A.~Krause, S.~M. Kakade, and M.~Seeger, ``Gaussian process optimization in the bandit setting: {N}o regret and experimental design,'' \emph{International Conference on Machine Learning}, pp. 1015--1022, 2010.

\bibitem{Chowdhury2017}
S.~R. Chowdhury and A.~Gopalan, ``On kernelized multi-armed bandits,'' \emph{International Conference on Machine Learning}, pp. 844--853, 2017.

\bibitem{berkenkamp2016safe}
F.~Berkenkamp, A.~P. Schoellig, and A.~Krause, ``Safe controller optimization for quadrotors with {G}aussian processes,'' in \emph{2016 IEEE International Conference on Robotics and Automation (ICRA)}.\hskip 1em plus 0.5em minus 0.4em\relax IEEE, 2016, pp. 491--496.

\bibitem{lubsen2024towards}
J.~L{\"u}bsen, C.~Hespe, and A.~Eichler, ``Towards safe multi-task {B}ayesian optimization,'' in \emph{6th Annual Learning for Dynamics \& Control Conference}.\hskip 1em plus 0.5em minus 0.4em\relax PMLR, 2024, pp. 839--851.

\bibitem{baumann2021gosafe}
D.~Baumann, A.~Marco, M.~Turchetta, and S.~Trimpe, ``Gosafe: {G}lobally optimal safe robot learning,'' in \emph{2021 IEEE International Conference on Robotics and Automation (ICRA)}.\hskip 1em plus 0.5em minus 0.4em\relax IEEE, 2021, pp. 4452--4458.

\bibitem{Krishnamoorthy2023}
D.~Krishnamoorthy and F.~J. Doyle~III, ``Model-free real-time optimization of process systems using safe {B}ayesian optimization,'' \emph{AIChE Journal}, vol.~69, no.~4, p. e17993, 2023.

\bibitem{fiedler2024safety}
C.~Fiedler, J.~Menn, L.~Kreisk{\"o}ther, and S.~Trimpe, ``On safety in safe {B}ayesian optimization,'' \emph{arXiv preprint arXiv:2403.12948}, 2024.

\bibitem{abbasi2013online}
Y.~Abbasi-Yadkori, ``Online learning for linearly parametrized control problems,'' \emph{PhD. Thesis, University of Alberta}, 2013.

\bibitem{Fiedler2021}
C.~Fiedler, C.~W. Scherer, and S.~Trimpe, ``Practical and rigorous uncertainty bounds for {G}aussian process regression,'' in \emph{Proceedings of the AAAI Conference on Artificial Intelligence}, vol.~35, no.~8, 2021, pp. 7439--7447.

\bibitem{faulwasser2024}
T.~Faulwasser and O.~Molodchyk, ``Wiener {C}haos in {K}ernel {R}egression: {T}owards {U}ntangling {A}leatoric and {E}pistemic {U}ncertainty,'' in \emph{Systems Theory in Data and Optimization}.\hskip 1em plus 0.5em minus 0.4em\relax Springer Cham, 2025.

\bibitem{Xiu2002}
D.~Xiu and G.~E. Karniadakis, ``{The Wiener--Askey Polynomial Chaos for Stochastic Differential Equations},'' \emph{SIAM Journal on Scientific Computing}, vol.~24, no.~2, pp. 619--644, 2002.

\bibitem{Wiener1938}
N.~Wiener, ``The {Homogeneous} {Chaos},'' \emph{American Journal of Mathematics}, vol.~60, no.~4, pp. 897--936, 1938.

\bibitem{Sullivan2015}
T.~Sullivan, \emph{{Introduction to Uncertainty Quantification}}.\hskip 1em plus 0.5em minus 0.4em\relax Springer International Publishing, 2015.

\bibitem{tudo:faulwasser23a}
T.~Faulwasser, R.~Ou, G.~Pan, P.~Schmitz, and K.~Worthmann, ``Behavioral theory for stochastic systems? {A} data-driven journey from {W}illems to {W}iener and back again,'' \emph{Annual Reviews in Control}, vol.~55, pp. 92--117, 2023.

\bibitem{Aronszajn1950}
N.~Aronszajn, ``Theory of reproducing kernels,'' \emph{Transactions of the American Mathematical Society}, vol.~68, no.~3, pp. 337--404, 1950.

\bibitem{Berlinet2004}
A.~Berlinet and C.~Thomas-Agnan, \emph{Reproducing Kernel Hilbert Spaces in Probability and Statistics}.\hskip 1em plus 0.5em minus 0.4em\relax Springer US, 2004.

\bibitem{Fasshauer2011}
G.~Fasshauer, ``{Positive Definite Kernels: Past, Present and Future},'' \emph{Dolomites Research Notes on Approximation}, vol.~4, no.~2, pp. 21--63, 2011.

\bibitem{Schoelkopf2001}
B.~Schölkopf and A.~J. Smola, \emph{\BIBforeignlanguage{en}{Learning with {Kernels}: {Support} {Vector} {Machines}, {Regularization}, {Optimization}, and {Beyond}}}.\hskip 1em plus 0.5em minus 0.4em\relax The MIT Press, 2001.

\bibitem{Lederer2019}
A.~Lederer, J.~Umlauft, and S.~Hirche, ``Uniform error bounds for {G}aussian process regression with application to safe control,'' \emph{Advances in Neural Information Processing Systems}, vol.~32, 2019.

\bibitem{garnett2023bayesian}
R.~Garnett, \emph{Bayesian optimization}.\hskip 1em plus 0.5em minus 0.4em\relax Cambridge University Press, 2023.

\bibitem{vershynin2018high}
R.~Vershynin, \emph{High-dimensional probability: An introduction with applications in data science}.\hskip 1em plus 0.5em minus 0.4em\relax Cambridge University Press, 2018, vol.~47.

\bibitem{Beckers2016}
T.~Beckers and S.~Hirche, ``Equilibrium distributions and stability analysis of {G}aussian process state space models,'' in \emph{55th Conference on Decision and Control (CDC)}.\hskip 1em plus 0.5em minus 0.4em\relax IEEE, 2016, pp. 6355--6361.

\end{thebibliography}

\end{document}